\documentclass{article}

\usepackage[preprint]{neurips_2026}

\usepackage[utf8]{inputenc}
\usepackage[T1]{fontenc}
\usepackage{hyperref}
\usepackage{url}
\usepackage{booktabs}
\usepackage{amsmath,amssymb,amsthm}
\usepackage{mathtools}
\usepackage{graphicx}
\graphicspath{{figures/}}
\usepackage{bm}
\usepackage{xcolor}
\usepackage{multirow}
\usepackage{enumitem}
\usepackage{cleveref}
\usepackage{nicefrac}
\usepackage{microtype}

\newtheorem{theorem}{Theorem}
\newtheorem{proposition}[theorem]{Proposition}
\newtheorem{corollary}[theorem]{Corollary}
\newtheorem{lemma}[theorem]{Lemma}

\theoremstyle{remark}
\newtheorem{remark}[theorem]{Remark}
\newtheorem*{remark*}{Remark}

\newcommand{\R}{\mathbb{R}}
\newcommand{\E}{\mathbb{E}}
\newcommand{\norm}[1]{\left\| #1 \right\|}

\newcommand{\bx}{\bm{x}}
\newcommand{\bmu}{\bm{\mu}}
\newcommand{\bsig}{\bm{\sigma}}
\newcommand{\ba}{\bm{a}}

\newcommand{\FS}{\textsc{FluidSplat}}

\title{\textsc{FluidSplat}: Reconstructing Physical Fields from Sparse Sensors via Gaussian Primitives}

\author{%
  Huaxi Huang$^{1}$,
  Meng Li$^{1,2}$,
  Zhengqing Gao$^{1,3}$,
  Xi Zhou$^{1}$,
  Xiaoshui Huang$^{4}$,
  Xiao Sun$^{1}$\thanks{Corresponding Author}\\
  $^{1}$Shanghai Artificial Intelligence Laboratory\\
  $^{2}$The Hong Kong University of Science and Technology\\
  $^{3}$Mohamed bin Zayed University of Artificial Intelligence,
  $^{4}$Shanghai Jiaotong University
}

\begin{document}
\maketitle

\begin{abstract}
Reconstructing continuous flow fields from sparse surface-mounted sensors is central to aerodynamic design, flow control, and digital-twin instrumentation.  
Existing neural methods for this task typically encode sensor readings into implicit latent codes with little spatial interpretability and limited formal guidance on how representational capacity should scale with observation count. 
Inspired by 3D Gaussian Splatting, we introduce \FS{}, a sensor-conditioned model that predicts $K$
anisotropic Gaussian primitives forming a partition-of-unity scaffold, a spatially explicit and interpretable intermediate representation of
the flow.  
For an idealized Gaussian primitive estimator, we prove an $O(K^{-s/d})$ approximation rate for fields with Sobolev smoothness $s$; incorporating $N$ noisy observations yields a squared-risk
decomposition with bias $O(K^{-2s/d})$ and variance
$O(\sigma^{2}K/N)$.
Balancing the two yields $K^{*}\!\sim\!(N/\sigma^{2})^{d/(2s+d)}$: primitive
count cannot grow freely under sparse sensing, revealing a variance bottleneck that motivates complementing the scaffold with a state-conditioned residual decoder.
Across four benchmarks spanning 2D and 3D, \FS{} achieves 11--28\% error reduction over several strong baselines on cylinder flow, AirfRANS, FlowBench LDC-3D, and PhySense-Car 3D benchmarks.


\end{abstract}

\section{Introduction}
\label{sec:intro}

Many physical systems are instrumented by a small number of sensors attached to the solid surfaces: pressure taps on an airfoil, strain gauges on a turbine blade, thermocouples on a reactor wall~\cite{manohar2018,callaham2019,liu2025dspo}.  Recovering the full surrounding flow field from such sparse, boundary-mounted measurements is essential
for aerodynamic design, real-time control, and the construction of physical digital twins, yet it remains statistically and geometrically challenging: the observations are few, spatially confined to the
boundary, and provide no direct measurements of the interior.

Several neural architectures target sparse-sensor field
reconstruction directly: attention-based sensor
encoders~\cite{senseiver2023},
Voronoi-tessellation networks~\cite{fukami2021},
resolution-invariant spectral
models~\cite{zhao2024recfno}, Fourier-feature latent reconstruction
models~\cite{nguyen2026flrnet}, and shallow grid
decoders~\cite{erichson2020}.  In parallel, operator-learning
frameworks~\cite{lu2021,li2021,azizzadenesheli2024neural} and
geometry-aware or transformer-based PDE
solvers~\cite{li2023geofno,li2023gino,hao2023gnot,wu2024transolver}
have advanced surrogate modeling on irregular meshes, while
differentiable sensor-placement frameworks~\cite{liu2025dspo,ma2025physense}
co-optimize where to observe and how to reconstruct; however, they primarily target dense-grid, full-field, or placement-centric settings rather than reconstruction under a fixed sparse surface layout.  For sparse surface sensing specifically, existing methods typically do not jointly provide (i) a spatially explicit state for the reconstructed field and (ii) formal characterization of how model capacity should scale with the number of observations, leaving the choice of representational complexity to heuristic tuning strategy.

These observations motivate an alternative design principle.  Under
sparse surface sensing, it is natural to adopt an inductive bias toward
coherent spatial structures (wakes, vortices,
near-wall pressure regions) that the observations can plausibly
constrain, while delegating fine-scale detail to a flexible decoder.
This suggests a structured primitive scaffold that organizes the
recoverable spatial structure explicitly.  Anisotropic Gaussian
primitives are a natural basis for such a scaffold: spatially explicit,
local, and parameterized by a finite set of interpretable quantities.
In neural rendering, 3D Gaussian Splatting~\cite{kerbl2023} and its
variants~\cite{huang2024_2dgs,yu2024mip} have shown that such
primitives can efficiently represent radiance and geometry in visual
scenes.  We adopt the same representational bias for physical field
reconstruction, using the primitives not as a standalone field model
but as a \textbf{sensor-conditioned structured scaffold} paired with a
residual decoder for the remaining correction.

Concretely, we introduce \FS{}, a sensor-conditioned framework that
represents the spatial state through $K$ learnable anisotropic Gaussian
primitives.  A sensor encoder maps the sparse observations to a global
context vector; a linear head projects this context into the primitive
parameters (centers, anisotropic scales, weights, and amplitudes),
forming a partition-of-unity
field~\cite{babuska1997}.  A Fourier-feature residual
decoder~\cite{tancik2020}, conditioned on the global context and
scaffold state, then
supplies the correction that lies beyond the primitive
span.  Unlike a standard latent decoder, the primitive layer yields a
spatially explicit intermediate field that is sensor-conditioned; the residual decoder then refines the
reconstruction beyond this scaffold.

To ground this design theoretically, we analyze idealized Gaussian
primitive estimators and develop, to our knowledge, the first
approximation--estimation account of splatting-style
sensor-conditioned field reconstruction.  The analysis separates three
effects.  First, unnormalized Gaussian primitive sums achieve an
$L^2$ approximation rate $O(K^{-s/d})$ for Sobolev-regular fields.
Second, the Shepard-normalized scaffold is stable and spatially
explicit, but its oracle approximation saturates at first order because
it reproduces constants but not higher-order polynomials.  Third, when
$K$ primitive coefficients are estimated from $N$ noisy observations, an
idealized fixed-dictionary least-squares model gives a bias--variance
trade-off with squared bias $O(K^{-2s/d})$ and variance
$O(\sigma_{\mathrm{noise}}^2K/N)$.  Balancing these terms yields the
capacity scale
$K^{*}\!\sim\!(N/\sigma_{\mathrm{noise}}^{2})^{d/(2s+d)}$ up to
problem-dependent constants.  In the sparse-sensor regime, this scale
suggests that primitive capacity cannot be increased freely, motivating
a compact Gaussian scaffold complemented by a residual decoder.

Empirically, \FS{} is evaluated on four public benchmarks spanning 2D and 3D.  On a
standard cylinder-flow benchmark~\cite{senseiver2023} it achieves the
best mean error across all surface-sensor layouts; on the published
interior-8 protocol it reaches $0.0097$ relative $L^{2}$ (reference
value: $0.039$).  On
AirfRANS~\cite{airfrans2022} with 8 surface-pressure sensors, it
reduces error by 11--23\% over the strongest baseline across three
generalization splits.
On FlowBench LDC-3D~\cite{flowbench2024}, \FS{} extends these gains to volumetric reconstruction with 28\% improvement over the strongest baseline.  As an additional 3D test, we further evaluate \FS{} on the released PhySense-Car sensor-only protocol~\cite{ma2025physense} for surface-pressure reconstruction, where it improves over the released baselines under the same evaluation pipeline.  In summary, our contributions are threefold:
\begin{enumerate}
\item We propose \FS{}
(Section~\ref{sec:method}), a sensor-conditioned architecture that
combines an explicit Gaussian primitive scaffold with a
state-conditioned residual decoder for sparse-sensor flow
reconstruction.
\item We develop an approximation--estimation analysis that explains the expressivity and sparse-sensing capacity limits of Gaussian primitive scaffolds, providing theoretical justification for the proposed
architecture (Section~\ref{sec:theory}; proofs in Appendices~\ref{app:approx}--\ref{app:estim}).
\item  On four public benchmarks (2D and 3D) under the reported evaluation protocols, \FS{} achieves the best mean performance among all evaluated methods, with 11--23\% error reduction on AirfRANS, 28\% on FlowBench LDC-3D, and additional gains on the released PhySense-Car sensor-only protocol (Section~\ref{sec:experiments}).
\end{enumerate}

\section{Related Work}
\label{sec:related}

\paragraph{Sparse-sensor flow reconstruction.}
Classical approaches combine data-driven sensor
placement~\cite{manohar2018} with POD-based or sparse
reconstruction~\cite{callaham2019}.  Neural methods include shallow
decoders from limited sensors~\cite{erichson2020,fukami2020},
Voronoi-tessellation CNNs~\cite{fukami2021}, attention-based
architectures that treat sensors as a sparse token
set~\cite{senseiver2023}, and Fourier-feature or spectral models for
resolution-invariant reconstruction~\cite{zhao2024recfno,nguyen2026flrnet}.
More broadly, operator-learning
frameworks~\cite{lu2021,li2021,azizzadenesheli2024neural} and
geometry-aware PDE
solvers~\cite{li2023geofno,li2023gino,hao2023gnot,wu2024transolver}
advance surrogate modeling on general domains but are primarily
developed for forward prediction with full-field supervision, although
adaptable to reconstruction tasks;
differentiable placement-optimization
frameworks~\cite{liu2025dspo,ma2025physense} co-learn sensor locations
and reconstruction and are complementary to our fixed surface-mounted
sensor setting: they optimize \emph{where} to observe, whereas we
study representational capacity under a given layout.  Among existing sparse-sensor methods, most encode
observations into an implicit latent and decode on a grid or at query
points; they typically do not jointly provide (i)~a spatially explicit
intermediate state and (ii)~formal characterization of how
representational capacity should scale with the observation count.

\paragraph{Gaussian primitives and neural fields.}
3DGS~\cite{kerbl2023}, descended from EWA
splatting~\cite{zwicker2002}, represents scenes via anisotropic
Gaussian primitives and has become a standard in neural
rendering~\cite{mildenhall2020,huang2024_2dgs,yu2024mip,chen2024survey}.
These primitives are spatially explicit, differentiable, and support
efficient rendering and evaluation, but have been developed primarily
for visual radiance and geometry.  Scattered-data approximation with
radial basis functions~\cite{wendland2005,schaback2006,nww2005} and
partition-of-unity methods~\cite{babuska1997} provide the classical
foundation for Gaussian-kernel approximants.  We are not aware of
prior work that employs splatting-style primitives as a
sensor-conditioned spatial scaffold for physical-field reconstruction
or analyses their capacity in this sparse-observation regime.

\paragraph{Approximation-theoretic context.}
Neural-network approximation theory~\cite{mhaskar1996,yarotsky2017}
and minimax estimation rates~\cite{stone1982,ibragimov2013}
supply the statistical tools for analyzing Gaussian-primitive
capacity.  Existing sparse-sensor methods do not, to our knowledge,
connect the capacity of an explicit spatial representation to
observation count via a bias-variance decomposition with a $K/N$
variance term; we provide this connection and show that it yields a
variance bottleneck motivating the residual decoder.

\section{\FS: Method}
\label{sec:method}

\subsection{Problem Setting}

Let $\Omega\subset\R^{d}$ be a bounded Lipschitz domain and
$\bm{f}:\Omega\to\R^{C}$ the target vector field
(e.g., velocity components plus pressure, $C=d+1$).
We are given $N$ noisy pointwise observations
\begin{equation}
\label{eq:obs}
y_{i}\;=\;\bm{f}(\bx_{i}^{\mathrm{obs}}) + \varepsilon_{i},\qquad
i=1,\dots,N,
\end{equation}
with $\bx_{i}^{\mathrm{obs}}$ typically located on the body surface
(e.g., surface-mounted pressure sensors) and noise satisfies mean zero $\mathbb E[\varepsilon_i]=0$ and bounded variance $\operatorname{Cov}(\varepsilon)\preceq\sigma^2_{\mathrm{noise}}  I_N$.
The task is to return a continuous reconstruction
$\hat{\bm{f}}(\bx)$ queryable at arbitrary $\bx\in\Omega$.

\subsection{Overview}

\FS{} maps a sparse sensor set to a continuous field through three
stages: (i) a
\emph{permutation-invariant sensor encoder} produces a global context
vector $\bm{z}$ from the unordered observation set; (ii) a linear head
predicts $K$ anisotropic Gaussian primitive parameters from $\bm{z}$,
forming a normalized partition-of-unity \emph{scaffold} field
$\bm{f}_{\mathrm{prim}}$; (iii) a \emph{state-conditioned residual
decoder}, whose inputs include the scaffold evaluation and
query-to-primitive cross-attention, supplies the correction that lies
beyond the scaffold span.  The scaffold is the
component analyzed theoretically in Section~\ref{sec:theory}; the
residual decoder is motivated by the variance bottleneck identified
therein.

\begin{figure}[t]
    \centering
    \includegraphics[width=0.9\linewidth]{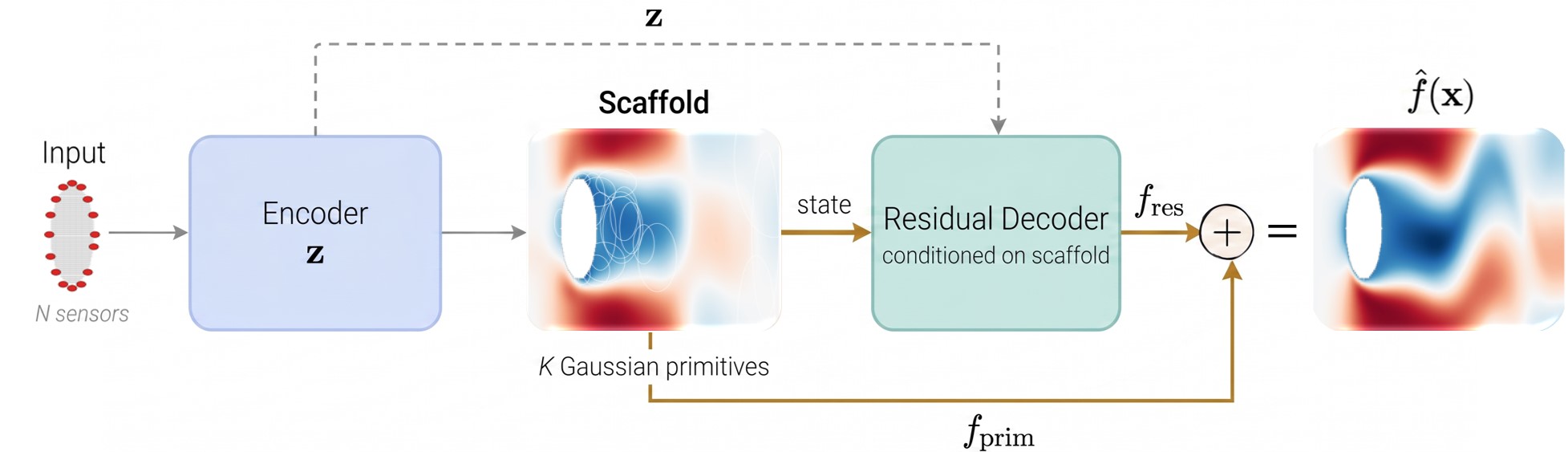}
    \caption{Overview of \FS{}. A permutation-invariant encoder maps $N$
    surface sensors to a global context $\bm{z}$, which parameterizes $K$
    anisotropic Gaussian primitives forming the scaffold field
    $\bm{f}_{\mathrm{prim}}$.  A residual decoder, conditioned on the
    scaffold state, produces $\bm{f}_{\mathrm{res}}$; the final output is
    $\hat{\bm{f}}(\bx) = \bm{f}_{\mathrm{prim}}(\bx) +
    \bm{f}_{\mathrm{res}}(\bx)$.  Insets show real model outputs on
    cylinder flow.}
    \label{fig:framework}
\end{figure}

\subsection{Sensor-Conditioned Gaussian Scaffold}
\label{sec:scaffold}

\paragraph{Primitive definition.}
Each of $K$ primitives is parameterized by center
$\bmu_{k}\in\Omega$, positive-definite precision structure
$\Sigma_{k}^{-1}$, weight $w_{k}\in(0,1)$, and
amplitude $\ba_{k}\in\R^{C}$.  The unnormalized Gaussian basis is
\begin{equation}
\label{eq:basis}
\varphi_{k}(\bx)
\;=\;
\exp\!\Bigl(
-\tfrac{1}{2}\,
(\bx - \bmu_{k})^{\!\top}\Sigma_{k}^{-1}(\bx - \bmu_{k})
\Bigr).
\end{equation}
In our 2D experiments, $\Sigma_{k}$ is parameterized by axis scales
$\bsig_{k}\in(0,\infty)^{2}$ and an optional rotation angle
$\theta_{k}$, giving
$\Sigma_{k}^{-1}=R_{\theta_k}^{\!\top}\,
\mathrm{diag}(\bsig_{k}^{-2})\,R_{\theta_k}$.

\paragraph{Partition-of-unity field.}
The scaffold is a normalized mixture
\begin{equation}
\label{eq:model}
\hat{\bm{f}}(\bx)
\;=\;
\underbrace{\sum_{k=1}^{K}\psi_{k}(\bx)\,\ba_{k}}
_{\bm{f}_{\mathrm{prim}}(\bx)}
\;+\;
\underbrace{\bm{f}_{\mathrm{res}}(\bx;\,\bm{z},\mathcal{S})}
_{\text{residual decoder}},
\qquad
\psi_{k}(\bx) =
\frac{w_{k}\varphi_{k}(\bx)}{\sum_{j}w_{j}\varphi_{j}(\bx)},
\end{equation}
with $\psi_{k}\ge 0$, $\sum_{k}\psi_{k}=1$, and
$\mathcal{S}=\{(\bmu_k,\Sigma_k,w_k,\ba_k)\}_{k=1}^K$ denoting the
full Gaussian state.
The scaffold provides a structured, spatially explicit, sensor-conditioned
intermediate field: primitives provide localized degrees of freedom
that can allocate support to coherent spatial structures, with
parameters directly predicted from the observations.  The scaffold is
not intended to fully reconstruct the field by itself; rather, it
exposes a structured spatial state that conditions the residual decoder.

\paragraph{Sensor encoder and primitive head.}
The encoder processes the unordered sensor set
$\{(\bx_{i}^{\mathrm{obs}},y_{i})\}_{i=1}^{N}$ via a shared
per-token MLP followed by mean and max pooling, yielding a global
context $\bm{z}\in\R^{D}$.  The aggregation is
permutation-invariant and does not require changing network dimensions
when the sensor count varies.
A linear projection maps $\bm{z}$ to all $K$ primitive parameter
vectors; for the 2D parameterization used in our experiments:
\begin{equation}
\{(\bmu_{k},\bsig_{k},\theta_{k},w_{k},\ba_{k})\}_{k=1}^{K}
\;=\;
\mathrm{Head}(\bm{z}),
\end{equation}
with centers mapped via sigmoid to the normalized computational
domain, scales via bounded sigmoid to $[\sigma_{\min},\sigma_{\max}]$,
and weights via sigmoid.  For complex geometries,
optional geometry tokens and case-metadata conditioning are fused into
$\bm{z}$ before the primitive head.

\subsection{State-Conditioned Residual Decoder}
\label{sec:resid}

The residual decoder refines the scaffold by receiving both global and
local state information at each query point $\bx$.  Its input
concatenates Fourier-encoded coordinates
$\gamma(\bx)$~\cite{tancik2020}, the global context $\bm{z}$, and the
scaffold evaluation $\bm{f}_{\mathrm{prim}}(\bx)$ together with the
basis mass $m(\bx)=\sum_{k}w_{k}\varphi_{k}(\bx)$, which signals
local scaffold support.  To allow the decoder to attend directly to the
primitive state, query tokens (formed from $\gamma(\bx)$ and $\bm{z}$)
attend to primitive-parameter tokens via multi-head cross-attention with
a residual connection and feed-forward layer.  An optional learned global
token projected from $\bm{z}$ is appended when used.

The concatenated features are passed through an MLP to produce the
residual $\bm{f}_{\mathrm{res}}(\bx;\bm{z},\mathcal{S})\in\R^{C}$.  Because the decoder
observes the scaffold field directly, it can condition its correction
on the primitive state---coupling the two components so that the
scaffold provides structured degrees of freedom while the decoder
supplies the remaining detail beyond the scaffold span.
The core architecture is shared across all experiments;
dataset-specific information enters only through lightweight
conditioning channels: the covariance parameterization of the Gaussian
state, and optional per-query geometry or metadata features when such
quantities are part of the benchmark protocol (Appendix~\ref{app:exp_setup}).

\subsection{Training and Inference}

\paragraph{Training.}
We train end-to-end with full-field supervision at randomly sampled
query points plus a soft sensor-consistency term:
\begin{equation}
\label{eq:loss}
\mathcal{L}
=
\E_{\bx\sim\Omega}\norm{\hat{\bm{f}}(\bx)-\bm{f}(\bx)}^{2}
+
\lambda_{\mathrm{obs}}\,
\tfrac{1}{N}\textstyle\sum_{i}\norm{\hat{\bm{f}}(\bx_{i}^{\mathrm{obs}})-y_{i}}^{2}.
\end{equation}

\paragraph{Inference.}
At test time only the $N$ sensor readings are provided; the model
produces a continuous field queryable at any resolution or mesh.  All
hyperparameters and dataset-specific conditioning details are given in
Appendix~\ref{app:exp_setup} and \ref{app:additional_tables}.

\section{Theoretical Analysis}
\label{sec:theory}

This section states the main results.  Full proofs, the dimension-specific analysis, and the connection to Navier--Stokes regularity are in Appendices~\ref{app:approx}--\ref{app:ns}.  

Let $\Omega \subset \mathbb{R}^d$ be the bounded spatial domain, $d$ the spatial dimension, and $s$ the Sobolev regularity of the target field. We write $a\lesssim b$ when $a \leq Cb$, where $C$ is independent of the primitive count $K$, sensor count $N$, and noise level $\sigma_\mathrm{noise}$. Geometric assumptions on primitive centers $\mu_k$, including fill distance $h$, separation distance $q$, and quasi-uniformity, are stated only when needed and formalized in Appendix~\ref{app:notation}.

\subsection{Representational Capacity}

\begin{theorem}[Gaussian primitive approximation]
\label{thm:rate}
Let $\Omega\subset\mathbb{R}^{d}$ be a bounded Lipschitz domain and
let $\varepsilon>0$.  For every $f\in H^{s+\varepsilon}(\Omega)$ with
$s>0$ and every $K\ge 1$, there exists a $K$-term unnormalized
Gaussian approximant $g_{K}=\sum_{k}c_{k}\varphi_{k}$ such that
\begin{equation}
\label{eq:rate}
\|f-g_{K}\|_{L^{2}(\Omega)}
\;\le\;
C_{\varepsilon,s,d,\Omega}\,K^{-s/d}\,
\|f\|_{H^{s+\varepsilon}(\Omega)}.
\end{equation}
The vector-valued case
$\bm{f}\in H^{s+\varepsilon}(\Omega;\mathbb{R}^{C})$ follows
componentwise.
\end{theorem}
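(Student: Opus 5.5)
We prove the bound by approximating the target with a band-limited function and then discretizing that function with Gaussians. The $\varepsilon$ of extra smoothness pays for the logarithmic losses that Gaussian kernels incur. We work componentwise throughout, so it suffices to treat scalar $f$.

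\emph{Step 1 (extension).} Use Stein's extension operator for the bounded Lipschitz domain $\Omega$ to obtain $Ef\in H^{s+\varepsilon}(\mathbb{R}^d)$ with $\|Ef\|_{H^{s+\varepsilon}(\mathbb{R}^d)}\lesssim\|f\|_{H^{s+\varepsilon}(\Omega)}$. Multiply by a smooth cutoff so that $Ef$ is supported in a fixed ball $B\supset\Omega$.

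\emph{Step 2 (Gaussian quasi-interpolation).} Set $h=K^{-1/d}$, up to a constant depending on $|B|$, so that a uniform grid $\{\bmu_k\}=h\mathbb{Z}^d\cap B'$ on a slightly enlarged ball has at most $K$ points. Use isotropic Gaussians $\varphi_k(\bx)=\exp(-|\bx-\bmu_k|^2/(2\tau^2h^2))$ with shape parameter $\tau$.

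Take a Maz'ya--Schmidt approximate quasi-interpolant
\[
g_K(\bx)=(2\pi\tau^2)^{-d/2}\sum_k (\Phi_h Ef)(\bmu_k)\,\varphi_k(\bx).
\]
Here $\Phi_h Ef$ is a mollified version of $Ef$, for example $Ef$ convolved with a Gaussian of width $\asymp h$. Its role is to make the samples well defined and to keep coefficients controlled when $s\le d/2$.

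The error $\|Ef-g_K\|_{L^2(\Omega)}$ then splits into three pieces:
\begin{itemize}
\item a smoothing error $\|Ef-\Phi_hEf\|_{L^2}\lesssim h^{s}\|Ef\|_{H^s}$;
\item the quasi-interpolation consistency error for the smooth function $\Phi_hEf$;
\item the saturation error of Gaussian approximate approximation, which is $O(e^{-\pi^2\tau^2})$.
\end{itemize}
The second piece is controlled via the Fourier side. By Poisson summation the sampled sum has symbol $\sum_{m}\hat{\varphi}(h\xi+2\pi m)$. Its deviation from $1$ at low frequencies is exponentially small in $\tau^2$, and at frequencies $|\xi|\gtrsim 1/h$ it is absorbed by the decay of $\widehat{Ef}$, which costs $h^{s}$ times the $H^s$ norm.

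\emph{Step 3 (choice of $\tau$, the main obstacle).} A fixed $\tau$ leaves the saturation term $e^{-\pi^2\tau^2}$, which does not decay in $K$. Fully reproducing Gaussians are impossible in finite sums, so the remedy is to let $\tau^2\asymp\log(1/h)$. This makes the saturation term $\lesssim h^{s}$.

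The price is that the effective kernel width $\tau h$ grows like $h\sqrt{\log(1/h)}$. In turn the smoothing and aliasing estimates acquire factors $(\log(1/h))^{c}$ with $c=c(s,d)$. We absorb these using the extra smoothness: $h^{s+\varepsilon}(\log(1/h))^{c}\le C_{\varepsilon}h^{s}$. This is exactly why the hypothesis is $H^{s+\varepsilon}$ rather than $H^s$.

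Truncating the grid to $B'$ introduces only an exponentially small tail on $\Omega$, because $Ef$ vanishes outside $B$ and Gaussians decay. If that tail is not already below $h^s$, we enlarge $B'$ by $O(\tau h)$; this changes the point count by a constant factor only.

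Collecting the three pieces gives
\[
\|f-g_K\|_{L^2(\Omega)}\le C_{\varepsilon,s,d,\Omega}\,h^{s}\|f\|_{H^{s+\varepsilon}(\Omega)}=C\,K^{-s/d}\|f\|_{H^{s+\varepsilon}(\Omega)}.
\]
Finally, redefining constants handles the case where the grid count exceeds $K$ by a bounded factor. The key technical check is the Fourier estimate in Step 2 with $K$-dependent $\tau$, and I expect that to be the main work.
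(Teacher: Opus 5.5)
The gap is in Step 2, where you assert that the symbol's deviation from $1$ at low frequencies is exponentially small in $\tau^2$. Write your scheme as $g_K=Q_hu$ with $u=\Phi_hEf$, $Q_hu=\sum_m u(hm)\,\eta(\cdot/h-m)$ and $\eta(y)=(2\pi\tau^2)^{-d/2}e^{-|y|^2/(2\tau^2)}$. Poisson summation gives $\widehat{Q_hu}(\xi)=\hat\eta(h\xi)\sum_{\nu\in\mathbb{Z}^d}\hat u(\xi+2\pi\nu/h)$.

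The $\nu\neq0$ terms (aliasing and saturation) are indeed harmless once $\tau^2\asymp\log(1/h)$. The $\nu=0$ term is not. It leaves the multiplier $1-\hat\eta(h\xi)=1-e^{-\tau^2h^2|\xi|^2/2}\approx\tfrac12\tau^2h^2|\xi|^2$, which is quadratic in $h|\xi|$, not exponentially small, and it grows with $\tau$. Equivalently, $\eta$ has vanishing first moments but second moments $\tau^2 I$, so up to saturation $\sum_m(hm)^2\eta(x/h-m)=x^2+\tau^2h^2$ in one dimension. A single-Gaussian quasi-interpolant therefore has order two. This is also what Maz'ya--Schmidt prove: their higher-order generating functions are Laguerre-polynomial multiples of the Gaussian, not Gaussian primitives. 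Your Gaussian mollifier has the same defect, giving only $\|Ef-\Phi_hEf\|_{L^2}\lesssim h^{\min(s,2)}\|Ef\|_{H^s}$.

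Consequently, take any $f\in C^\infty(\bar\Omega)$ with $\Delta f\not\equiv0$; such $f$ lies in every $H^{s+\varepsilon}$. Your $g_K$ has error of order $\tau^2h^2\|\Delta f\|_{L^2(\Omega)}\asymp K^{-2/d}\log K$, which is not $O(K^{-s/d})$ for any $s\ge2$. The $\varepsilon$ cannot rescue this, because the scheme's order is capped at $2$ rather than $s+\varepsilon$. Tuning $\tau$ does not help either: shrinking it revives the saturation term, growing it inflates the $\tau^2h^2$ term, and the best balance is $h^2\log(1/h)$. As written, your argument proves the statement only for $0<s<2$.

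What is missing is a Gaussian-only construction with higher-order moment cancellation. One fix within your linear, fixed-grid framework is to place at each node $M$ concentric Gaussians with variances $j\tau^2h^2$, $j=1,\dots,M$. Weight them by constants $\alpha_j$ solving $\sum_j\alpha_j\, j^k=\delta_{k0}$ for $0\le k<M$. Then $|1-\hat\eta(\omega)|\lesssim\min\{1,(\tau|\omega|)^{2M}\}$ while the saturation stays $O(e^{-c\tau^2})$. With $2M>s+\varepsilon$ and a mollifier of matching order, your Step 3 log bookkeeping goes through, at the cost of a factor $M$ in the primitive count. Alternatively, replace samples by deconvolved coefficients, e.g.\ the $L^2$ projection onto the Gaussian shift-invariant space.

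The paper avoids the issue through nonlinear theory instead. After extension it embeds $H^{s+\varepsilon}$ into the Triebel--Lizorkin space $F^s_{t,q}$ with $1/t=1/2+s/d$. It then invokes Theorem~9 of Hangelbroek--Ron, whose $K$-term networks use adaptive centers and spatially varying tension parameters. Varying the widths is exactly what lets Gaussian sums exceed order two, and there the $\varepsilon$ pays for the embedding rather than for logarithms. A repaired version of your route would give something the paper's does not: a fixed quasi-uniform dictionary attaining $K^{-s/d}$. That is the form of approximation bound that assumption~(i) of Theorem~\ref{thm:bv} actually needs.
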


The arbitrarily small $\varepsilon>0$ is the technical cost of passing from Sobolev spaces to the nonlinear Gaussian approximation class. The proof specializes nonlinear Gaussian-network approximation rates of Hangelbroek and Ron~\cite{hangelbroek2010nonlinear} to bounded Lipschitz domains via a Sobolev embedding argument; see Appendix~\ref{app:approx}.

\begin{proposition}[Partition-of-unity saturation]
\label{prop:norm_rate}
Assume that the Gaussian centers are quasi-uniform, the weight satisfy $0<w_-\le w_k\le w_+<\infty$, and the Gaussian covariances are comparable to the squared fill distance $ \Sigma_k \asymp h_K^2 I$. For $f\in H^s(\Omega), s>1+d/2$, the oracle scaffold $f_{\mathrm{prim}}$ in~\eqref{eq:model}, with $a_k=f(\bmu_k)$, satisfies
\[
\|f-f_{\mathrm{prim}}\|_{L^2(\Omega)}
\lesssim
K^{-1/d}\|f\|_{H^s(\Omega)}.
\]
\end{proposition}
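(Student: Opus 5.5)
Since $s>1+d/2$, Sobolev embedding gives $f\in C^{1}(\overline\Omega)$, in fact $W^{1,\infty}(\Omega)$, with $\|\nabla f\|_{L^\infty(\Omega)}\lesssim\|f\|_{H^s(\Omega)}$; in particular the point values $a_k=f(\bmu_k)$ are well defined. The plan is to use the partition-of-unity identity $\sum_k\psi_k\equiv1$ to write the error pointwise as
\[
f(\bx)-f_{\mathrm{prim}}(\bx)=\sum_{k=1}^K\psi_k(\bx)\bigl(f(\bx)-f(\bmu_k)\bigr),
\]
so that $|f(\bx)-f_{\mathrm{prim}}(\bx)|\le\|\nabla f\|_{L^\infty}\sum_k\psi_k(\bx)\,|\bx-\bmu_k|$. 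Here we need that $\Omega$ (being Lipschitz) is covered by finitely many star-shaped or uniformly connected pieces, so that $|f(\bx)-f(\bmu_k)|\lesssim\|\nabla f\|_\infty|\bx-\bmu_k|$ holds with a constant depending only on $\Omega$. For simplicity one may first extend $f$ to $\R^d$ by a Stein extension and apply the mean value theorem on the line segment.

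The key step is a localization bound: uniformly in $\bx\in\Omega$,
\[
\sum_k\psi_k(\bx)\,|\bx-\bmu_k|\lesssim h_K .
\]
To prove it, bound the denominator from below and the numerator from above. By the definition of fill distance there is a center $\bmu_{j}$ with $|\bx-\bmu_j|\le h_K$. Since $\Sigma_j\asymp h_K^2I$ and $w_j\ge w_-$, this gives $\sum_j w_j\varphi_j(\bx)\ge w_-e^{-c_1}$. For the numerator, use $\varphi_k(\bx)\le\exp(-c_2|\bx-\bmu_k|^2/h_K^2)$ and $w_k\le w_+$. Then group the centers into dyadic-like shells $\{\ell h_K\le|\bx-\bmu_k|<(\ell+1)h_K\}$. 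Quasi-uniformity ($q\asymp h_K$) implies that each shell contains $O((\ell+1)^{d-1})$ centers by a volume-packing argument. Hence
\[
\sum_k w_k\varphi_k(\bx)|\bx-\bmu_k|\lesssim h_K\sum_{\ell\ge0}(\ell+1)^{d}e^{-c_2\ell^2}\lesssim h_K .
\]
Dividing the two bounds yields the claim.

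Combining these steps gives $\|f-f_{\mathrm{prim}}\|_{L^\infty(\Omega)}\lesssim h_K\|f\|_{H^s}$. Integrating over the bounded domain then gives the $L^2$ bound with a factor $|\Omega|^{1/2}$. Finally, quasi-uniformity of $K$ centers in a bounded $d$-dimensional domain gives $h_K\asymp K^{-1/d}$, again by a packing/covering volume count using $q\asymp h_K$ and $|\Omega|>0$. This yields the stated $K^{-1/d}$ rate. It is worth noting, though not needed for the proof, that the rate cannot improve, because the Shepard-type operator reproduces only constants.

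The main obstacle I expect is the localization estimate, specifically the uniform lower bound on the denominator near $\partial\Omega$. The covering argument requires the fill distance to be measured relative to $\Omega$, with centers in $\Omega$, and the Gaussian tails of the numerator must be controlled by the shell-count. I would handle this by stating the fill distance as $\sup_{\bx\in\Omega}\min_k|\bx-\bmu_k|$, which directly gives the nearby center, and by making the constants depend on the covariance comparability constants and on $w_\pm$.
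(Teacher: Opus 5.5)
Your proposal is correct and follows the paper's proof essentially step for step. Both use the partition-of-unity error identity, then the first-moment localization bound (a nearest center within $h_K$ bounds the denominator from below, and a Gaussian-decay shell count bounds the numerator), then the passage from $L^\infty$ to $L^2$ on the bounded domain, and finally $h_K\asymp K^{-1/d}$. Your Stein-extension justification of the mean-value bound on a possibly nonconvex Lipschitz domain, and your sharper $(\ell+1)^{d-1}$ shell count, slightly tighten steps the paper states directly, but they do not change the argument.
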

The first-order saturation reflects zeroth-order polynomial reproduction of the partition of unity, necessitating a residual component for higher-order correction.

\begin{remark}[Scope of novelty]
\label{rem:first_rate}
Theorem~\ref{thm:rate} specializes existing nonlinear Gaussian
approximation theory to the primitive scaffold used in \FS{}, while
Proposition~\ref{prop:norm_rate} isolates the first-order saturation of
the Shepard-normalized field.  Together with the estimation analysis
below, these results provide, to our knowledge, the first
approximation--estimation account of splatting-style
sensor-conditioned field reconstruction, linking Gaussian primitive
count to observation noise through an explicit capacity scaling.
\end{remark}

\subsection{Variance Bottleneck}

\paragraph{Ordinary least squares.} Let $V_{K}=\mathrm{span}\{\varphi_{k}\}_{k=1}^{K}$ be a $K$-dimensional Gaussian dictionary with fixed centers $\mu_k$ and precision matrices $\Sigma_k^{-1}$. Only the linear coefficients are estimated from data. Assume that the observation matrix $\bm{A}\in\R^{N\times K}$, $A_{ij}=\varphi_{j}(\bx_{i}^{\mathrm{obs}})$ has full column rank. Let $\bm{G}\in\R^{K\times K}$ with $G_{jl}=\int_{\Omega}\varphi_{j}\varphi_{l}\mathrm{d}\bx$ denote the Gram matrix and $r_X:=(r_1,\ldots,r_N)^\top$ with $r_i=f(x_i)-f_K^*(x_i)$ denote the approximation residual. Define the least-squares estimator
\[
\hat f_K(x)=\sum_{k=1}^K\hat c_k\varphi_k(x) \text{, where }  \hat c=\arg\min_{c\in\mathbb R^K}
\frac1N\sum_{i=1}^N
\left(y_i-\sum_{k=1}^K c_k\varphi_k(x_i)\right)^2.
\]

\begin{theorem}[Bias--variance bound]
\label{thm:bv}

Assume (i) the fixed dictionary \(V_K\) admits the approximation bound; (ii) the samples are spectrally stable $A^\top A\asymp NG$; (iii) the approximation residual at sensor locations is controlled $\|r_X\|_2/\sqrt{N}\lesssim \|r\|_{L^2(\Omega)}$. Then
\begin{equation}
\label{eq:bv}
\E\norm{\bm{f}-\hat{\bm{f}}_{K}}_{L^{2}}^{2}
\;\le\;
\underbrace{C_{\mathrm{bias}}\,K^{-2s/d}\norm{\bm{f}}_{H^{s+\varepsilon}}^{2}}
_{\mathrm{bias}^{2}}
\;+\;
\underbrace{C_{\mathrm{var}}\,\sigma^{2}_{\mathrm{noise}} \,K/N}
_{\mathrm{variance}}.
\end{equation}
where $C_{\mathrm{bias}}$ and $C_{\mathrm{var}}$ are constants independent of $K, N, \sigma_{\mathrm{noise}}$.
\end{theorem}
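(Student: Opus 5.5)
The plan is the standard fixed-design least-squares decomposition, split at the best approximant in $V_K$. Let $f_K^*=\sum_k c_k^*\varphi_k\in V_K$ be the element supplied by assumption (i); this is the $K$-term approximant of Theorem~\ref{thm:rate}, with the dictionary fixed to its centers and precisions. Then
\[
\|f-f_K^*\|_{L^2(\Omega)}\lesssim K^{-s/d}\|f\|_{H^{s+\varepsilon}}.
\]
I will argue scalar-valued throughout; the vector case follows by summing over the $C$ components. Writing $y=Ac^*+r_X+\varepsilon$ and using full column rank, the estimator is $\hat c=(A^\top A)^{-1}A^\top y$. Hence
\[
\hat c-c^*=(A^\top A)^{-1}A^\top r_X+(A^\top A)^{-1}A^\top\varepsilon .
\]
For any coefficient vector $v$ we have $\|\sum_k v_k\varphi_k\|_{L^2}^2=v^\top G v$. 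Applying $(a+b+c)^2\le 3(a^2+b^2+c^2)$ then gives
\[
\E\|f-\hat f_K\|_{L^2}^2\le 3\|f-f_K^*\|_{L^2}^2+3\,b^\top G b+3\,\E[\eta^\top G\eta],
\]
where $b=(A^\top A)^{-1}A^\top r_X$ and $\eta=(A^\top A)^{-1}A^\top\varepsilon$. (Since $\E\varepsilon=0$ the cross term with $\eta$ actually vanishes, but the crude constant suffices.)

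\textbf{Bias terms.} The first term is $\lesssim K^{-2s/d}\|f\|^2_{H^{s+\varepsilon}}$ by (i). For the second, assumption (ii) gives $G\preceq c_2 N^{-1}A^\top A$, so
\[
b^\top G b\le \tfrac{c_2}{N}\,r_X^\top A(A^\top A)^{-1}A^\top r_X=\tfrac{c_2}{N}\|P_A r_X\|_2^2\le \tfrac{c_2}{N}\|r_X\|_2^2 ,
\]
where $P_A$ is the orthogonal projector onto $\mathrm{range}(A)$. Assumption (iii), with $r=f-f_K^*$, then bounds this by $\|r\|_{L^2}^2\lesssim K^{-2s/d}\|f\|^2_{H^{s+\varepsilon}}$. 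So the discrete residual contributes only to the bias constant $C_{\mathrm{bias}}$.

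\textbf{Variance term.} I would write
\[
\E[\eta^\top G\eta]=\operatorname{tr}\!\bigl(G(A^\top A)^{-1}A^\top\,\mathrm{Cov}(\varepsilon)\,A(A^\top A)^{-1}\bigr).
\]
Since $\mathrm{Cov}(\varepsilon)\preceq\sigma^2_{\mathrm{noise}}I_N$ and $M\mapsto\operatorname{tr}(BMB^\top)$ is monotone in the Loewner order, this is at most $\sigma^2_{\mathrm{noise}}\operatorname{tr}(G(A^\top A)^{-1})$. Assumption (ii) also gives $(A^\top A)^{-1}\preceq (c_1N)^{-1}G^{-1}$. The trace is therefore at most $\sigma^2_{\mathrm{noise}}\operatorname{tr}(I_K)/(c_1N)=\sigma^2_{\mathrm{noise}}K/(c_1N)$. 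Collecting terms yields \eqref{eq:bv} with $C_{\mathrm{bias}}$ depending on the constants in (i) and (iii) and $c_2$, and $C_{\mathrm{var}}=3/c_1$.

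\textbf{Main obstacle.} The delicate step is the Loewner-order manipulation in the variance term. To make $G^{1/2}(A^\top A)^{-1}G^{1/2}\preceq (c_1N)^{-1}I$ rigorous, I would work with the congruence by $G^{1/2}$: invert (ii) as $A^\top A\succeq c_1 N G$ and use that $X\succeq Y\succ0$ implies $Y^{-1}\succeq X^{-1}$. This needs $G\succ0$, which holds because distinct Gaussians are linearly independent on an open set. A secondary caveat concerns (iii), which must be applied to the specific $f_K^*$ from (i); pointwise residuals make sense there because $H^{s+\varepsilon}$ embeds in continuous functions when $s>d/2$. Otherwise (iii) should be read purely as a hypothesis on $r_X$.
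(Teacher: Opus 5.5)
Your argument is correct and is essentially the paper's proof: it uses the same identity $\hat c-c^*=P(r_X+\varepsilon)$ with $P=(A^\top A)^{-1}A^\top$, the same control of the residual term through the lower half of (ii) and the projection $AP$, and the same trace bound $\operatorname{tr}\bigl(G(A^\top A)^{-1}\bigr)\le K/(c_1N)$ for the noise. The only differences are minor. The paper takes $f_K^*$ to be the $L^2(\Omega)$-projection onto $V_K$, which is the $f_K^*$ entering $r_X$ in (iii), so (i) enters through the best-approximation property and the Pythagorean identity replaces your factor $3$; and both of your uses of (ii) are the lower bound $A^\top A\succeq c_1NG$, so the constant in $G\preceq c_2N^{-1}A^\top A$ is really $1/c_1$, not the upper spectral constant.
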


\begin{corollary}[Capacity scale]
\label{thm:opt_K}
Minimizing the upper bound in~\eqref{eq:bv} gives
\begin{equation}
\label{eq:opt_K}
K^{*}\sim(N/\sigma^{2}_{\mathrm{noise}} )^{d/(2s+d)}\,
\norm{\bm{f}}_{H^{s+\varepsilon}}^{2d/(2s+d)},
\end{equation}
At this scale,
\[
\mathbb E\|f-\hat f_{K^*}\|_{L^2(\Omega)}^2
\lesssim
\left(\frac{\sigma_{\mathrm{noise}}^2}{N}\right)^{2s/(2s+d)}
\|f\|_{H^{s+\varepsilon}}^{2d/(2s+d)} .
\]
\end{corollary}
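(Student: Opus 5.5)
The plan is to treat the right-hand side of \eqref{eq:bv} as a function of a continuous variable $K>0$ and minimize it by elementary calculus. Set $B:=C_{\mathrm{bias}}\norm{\bm{f}}_{H^{s+\varepsilon}}^{2}$ and $V:=C_{\mathrm{var}}\sigma_{\mathrm{noise}}^{2}/N$. The bound is then
\[
R(K)=B\,K^{-2s/d}+V\,K .
\]
By Theorem~\ref{thm:bv}, whose constants do not depend on $K$, $N$ or $\sigma_{\mathrm{noise}}$, this bound holds for every admissible $K$.

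The function $R$ is strictly convex on $(0,\infty)$, and $R(K)\to\infty$ both as $K\to0^+$ and as $K\to\infty$. It therefore has a unique minimizer, found by solving $R'(K)=0$:
\[
-\tfrac{2s}{d}BK^{-2s/d-1}+V=0 .
\]
This gives
\[
K^{*}=\bigl(\tfrac{2s}{d}\,B/V\bigr)^{d/(2s+d)}
\sim (N/\sigma_{\mathrm{noise}}^{2})^{d/(2s+d)}\norm{\bm{f}}_{H^{s+\varepsilon}}^{2d/(2s+d)},
\]
where the constants $C_{\mathrm{bias}}$, $C_{\mathrm{var}}$ and $2s/d$ are absorbed into $\sim$. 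This is \eqref{eq:opt_K}.

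Next I substitute $K^*$ back into $R$. At the optimum the two terms balance: the first-order condition gives $VK^*=\tfrac{2s}{d}B(K^*)^{-2s/d}$. So $R(K^*)\asymp VK^*$, and
\[
VK^*\asymp V^{1-d/(2s+d)}B^{d/(2s+d)}=V^{2s/(2s+d)}B^{d/(2s+d)} .
\]
Writing out $V$ and $B$, this is $(\sigma_{\mathrm{noise}}^2/N)^{2s/(2s+d)}\norm{\bm{f}}_{H^{s+\varepsilon}}^{2d/(2s+d)}$, which is the stated risk bound.

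The only delicate point is that $K$ must be a positive integer, and the rounded value must also satisfy the hypotheses of Theorem~\ref{thm:bv}. In particular $\bm{A}$ needs full column rank and $A^\top A\asymp NG$, which forces $K\le N$.

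To handle rounding, I take $\lceil K^*\rceil$. This changes each term by at most a constant factor as long as $K^*\ge1$: the variance term at most doubles, and the bias term only decreases.

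I will state explicitly that the corollary concerns the regime where $1\le K^*\lesssim N$ and assumptions (i)--(iii) hold at that $K$. These are regime assumptions, not conclusions of the proof. With $K^*\lesssim N$ the admissibility constraint is met, and the risk bound follows from Theorem~\ref{thm:bv} evaluated at $\lceil K^*\rceil$.
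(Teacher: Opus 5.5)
Your proposal is correct and follows essentially the same route as the paper: relax $K$ to a continuous variable, solve $R'(K)=0$ for the unique minimizer of $BK^{-2s/d}+VK$, and use the balance of the two terms at $K^*$ to obtain the $(\sigma_{\mathrm{noise}}^2/N)^{2s/(2s+d)}\norm{\bm{f}}_{H^{s+\varepsilon}}^{2d/(2s+d)}$ risk. You also round explicitly to $\lceil K^*\rceil$ and restrict to the regime $1\le K^*\lesssim N$, which is needed for $\bm{A}$ to have full column rank so that Theorem~\ref{thm:bv} applies; the paper covers the first point only with a parenthetical remark about the integer constraint and does not mention the second.
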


\begin{remark}[Why residual decoders help]
\label{rem:residual-capacity}
Increasing $K$ reduces approximation bias by enlarging the Gaussian dictionary, but it also increases the number of coefficients estimated from noisy observations.  In the fixed-dictionary least-squares model, this produces a variance term proportional to $K/N$.  The resulting capacity scale suggests that, under sparse sensing, the primitive field should remain compact.  This motivates using the Gaussian primitives as a spatial scaffold, with the residual decoder modeling corrections not captured by this low-dimensional component.
\end{remark}

\section{Experiments}
\label{sec:experiments}

\paragraph{Datasets.}
\emph{Senseiver cylinder}: 2D cylinder vorticity
benchmark, evaluated under (a) the official interior-8
protocol~\cite{senseiver2023} and (b) our surface-4/8/16 layouts on
the cylinder wall.
\emph{AirfRANS}~\cite{airfrans2022}: 2D RANS airfoil simulations with
surface-pressure sensors uniformly spaced in arclength; three splits
--- Full, AoA-shift, Re-shift.
\emph{FlowBench LDC-3D}~\cite{flowbench2024}: 3D lid-driven cavity with
random internal obstacles on a $64^3$ grid; $\text{Re}\in[10,997]$;
four output channels ($u,v,w,p$); 700/100/200 train/val/test split.
AirfRANS and cylinder-surface results
use fixed layouts and five training seeds; LDC-3D uses five seeds; the interior-8 Senseiver
baseline is quoted from the published paper.  Relative $L^{2}$ error
is the primary metric.  See Appendix~\ref{app:exp_setup} for complete details.

\paragraph{Baselines.}
\emph{Senseiver official} for the cylinder protocol (official code
adapted to the surface layouts with a training-length sweep, detailed in
Appendix~\ref{app:baselines});
\emph{Senseiver official-adapter} for AirfRANS; \emph{DeepONet}; and two
operator-style re-implementations (\emph{RecFNO} and \emph{FLRONet},
labeled ``reimpl.'' throughout).  POD-Ridge is included for
AirfRANS as a classical baseline.

\paragraph{Reporting convention.}
Unless explicitly stated otherwise, main-table entries report the mean
and standard deviation over training seeds.  When a best-of-five value
is shown in parentheses, it is provided only to compare with
single-run reporting conventions; rankings are based on seed means.

\paragraph{Evaluation order.}
We present results from simple to realistic.  The Senseiver cylinder
benchmark is a controlled 2D proof-of-concept where the flow geometry,
field variable, and train/test split are simple enough to isolate the
sensing architecture.  AirfRANS is the more realistic aerodynamic
benchmark: irregular meshes, airfoil geometry, multiple physical
channels, and out-of-distribution AoA/Reynolds splits make it the
stronger test of robustness.  FlowBench LDC-3D extends the evaluation
to full 3D volumetric reconstruction with complex obstacle geometries.

\begin{table}[t]
\centering
\caption{Controlled cylinder benchmark.  Relative $L^{2}$ is evaluated
on 4{,}950 test frames.  Interior-8 uses the official Senseiver sensor
layout and protocol; the Senseiver entry is the published value, while
all other Interior-8 entries are single-run local evaluations under the
same protocol.  Surface columns report mean $\pm$ seed std over seeds
$\{123,\dots,127\}$.  Ranking is by seed mean for surface results;
lower is better.}
\label{tab:cylinder}
\scriptsize
\resizebox{\linewidth}{!}{%
\begin{tabular}{lcccc}
\toprule
Method & Interior-8 (protocol) & Surface-4 & Surface-8 & Surface-16 \\
\midrule
\FS{} (ours)            & \textbf{0.0097} & \textbf{0.0642 $\pm$ 0.0041} & \textbf{0.0358 $\pm$ 0.0073} & \textbf{0.0160 $\pm$ 0.0008} \\
Senseiver official      & 0.039  & 0.0759 $\pm$ 0.0121 & 0.0580 $\pm$ 0.0071 & 0.0556 $\pm$ 0.0109 \\
DeepONet                & 0.3847 & 0.0925 $\pm$ 0.0034 & 0.1301 $\pm$ 0.0137 & 0.0946 $\pm$ 0.0023 \\
FLRONet (reimpl.)       & 0.3715 & 0.0693 $\pm$ 0.0066 & 0.0678 $\pm$ 0.0099 & 0.0460 $\pm$ 0.0037 \\
RecFNO (reimpl.)        & 0.0875 & 0.0745 $\pm$ 0.0162 & 0.1024 $\pm$ 0.0112 & 0.0705 $\pm$ 0.0056 \\
\bottomrule
\end{tabular}%
}
\end{table}

\begin{table}[t]
\centering
\caption{AirfRANS main results, 8-sensor surface-pressure layout,
five training seeds.  Lower is better; best in bold.}
\label{tab:airfrans}
\small
\begin{tabular}{lccc}
\toprule
Method & Full-8 & AoA-8 & Reynolds-8 \\
\midrule
\FS{} (ours)         & \textbf{0.2518 $\pm$ 0.0389}
                     & \textbf{0.5280 $\pm$ 0.1010}
                     & \textbf{0.3129 $\pm$ 0.0229} \\
DeepONet             & 0.3289 $\pm$ 0.0330 & 0.5937 $\pm$ 0.0991 & 0.3859 $\pm$ 0.0183 \\
Senseiver official-adapter & 0.3891 $\pm$ 0.0311 & 0.7546 $\pm$ 0.0904 & 0.4647 $\pm$ 0.0674 \\
RecFNO (reimpl.)     & 0.3939 $\pm$ 0.0503 & 0.8078 $\pm$ 0.1297 & 0.4400 $\pm$ 0.0404 \\
FLRONet (reimpl.)    & 0.4339 $\pm$ 0.0323 & 0.8013 $\pm$ 0.0706 & 0.4710 $\pm$ 0.0461 \\
POD-Ridge            & 0.5849              & 1.2982              & 0.6024              \\
\bottomrule
\end{tabular}
\end{table}

\begin{table}[!htpb]
\centering
\caption{FlowBench LDC-3D results (5 seeds).  Relative $L^2$ over all
four channels ($u,v,w,p$); lower is better.  Best in bold.}
\label{tab:ldc3d}
\scriptsize
\resizebox{\linewidth}{!}{%
\begin{tabular}{lcccc}
\toprule
Method & $N\!=\!5$ & $N\!=\!10$ & $N\!=\!20$ & $N\!=\!40$ \\
\midrule
\FS{} (ours) & \textbf{0.0307 $\pm$ 0.0009} & \textbf{0.0288 $\pm$ 0.0009} & \textbf{0.0286 $\pm$ 0.0010} & \textbf{0.0284 $\pm$ 0.0006} \\
FLRONet (reimpl.) & 0.0435 $\pm$ 0.0010 & 0.0397 $\pm$ 0.0004 & 0.0399 $\pm$ 0.0008 & 0.0402 $\pm$ 0.0010 \\
DeepONet & 0.0518 $\pm$ 0.0018 & 0.0606 $\pm$ 0.0212 & 0.0485 $\pm$ 0.0046 & 0.0459 $\pm$ 0.0043 \\
Senseiver (reimpl.) & 0.0573 $\pm$ 0.0025 & 0.0854 $\pm$ 0.0347 & 0.0542 $\pm$ 0.0055 & 0.0560 $\pm$ 0.0039 \\
RecFNO (reimpl.) & 0.0850 $\pm$ 0.0325 & 0.0678 $\pm$ 0.0007 & 0.0679 $\pm$ 0.0006 & 0.0753 $\pm$ 0.0158 \\
POD-Ridge & 1.0998 & 1.0225 & 1.0415 & 1.2003 \\
\bottomrule
\end{tabular}%
}
\end{table}

\subsection{Senseiver Cylinder: Controlled Concept Validation}

Table~\ref{tab:cylinder} compares methods on the Senseiver cylinder
benchmark under the official interior-8 protocol and our new
surface-sensor protocol.  For Interior-8, we report \FS{} under the
same layout and evaluation metric and quote the published Senseiver
value $0.039$~\cite{senseiver2023} as the official reference.  Under the surface
protocol, \FS{} obtains the best five-seed mean for every sensor
count.  Exact per-seed surface results are provided in
Appendix~\ref{app:additional_tables}; the main table reports
seed-level statistics.

\subsection{AirfRANS: Realistic Sparse Surface-Pressure Reconstruction}

Table~\ref{tab:airfrans} reports the primary metric (clean
reconstruction, mean $\pm$ std over 5 seeds) on the 8-sensor
surface-pressure layout.  \FS{} is the best method in every setting
with 11.1--23.4\% relative improvement over the strongest non-ours
baseline.  Relative to the Senseiver official-adapter baseline, the
improvement is 30.0--35.3\% across the three splits.

\subsection{FlowBench LDC-3D: 3D Volumetric Reconstruction}

\begin{figure}[!htpb]
\centering
\small
\textbf{(a)} Cylinder Surface-8, vorticity\\[2pt]
\includegraphics[width=\linewidth]{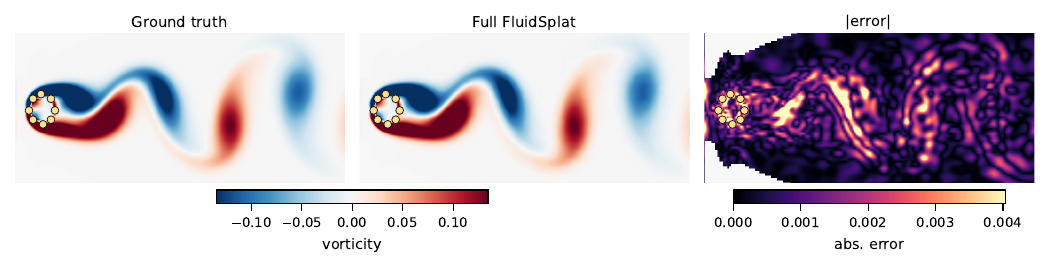}\\[6pt]
\textbf{(b)} AirfRANS Full-8, centered pressure\\[2pt]
\includegraphics[width=\linewidth]{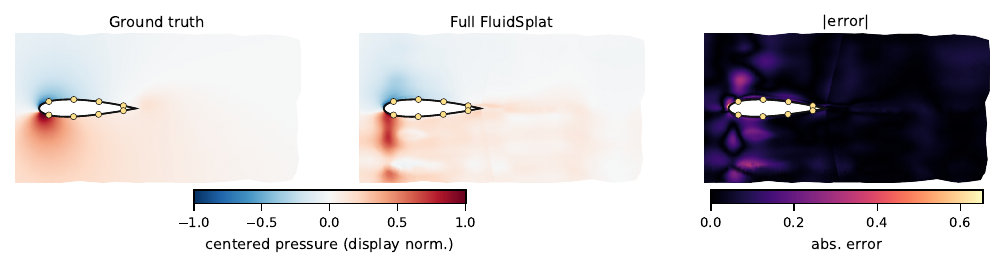}\\[6pt]
\textbf{(c)} FlowBench LDC-3D, velocity streamlines\\[2pt]
\includegraphics[width=0.82\linewidth]{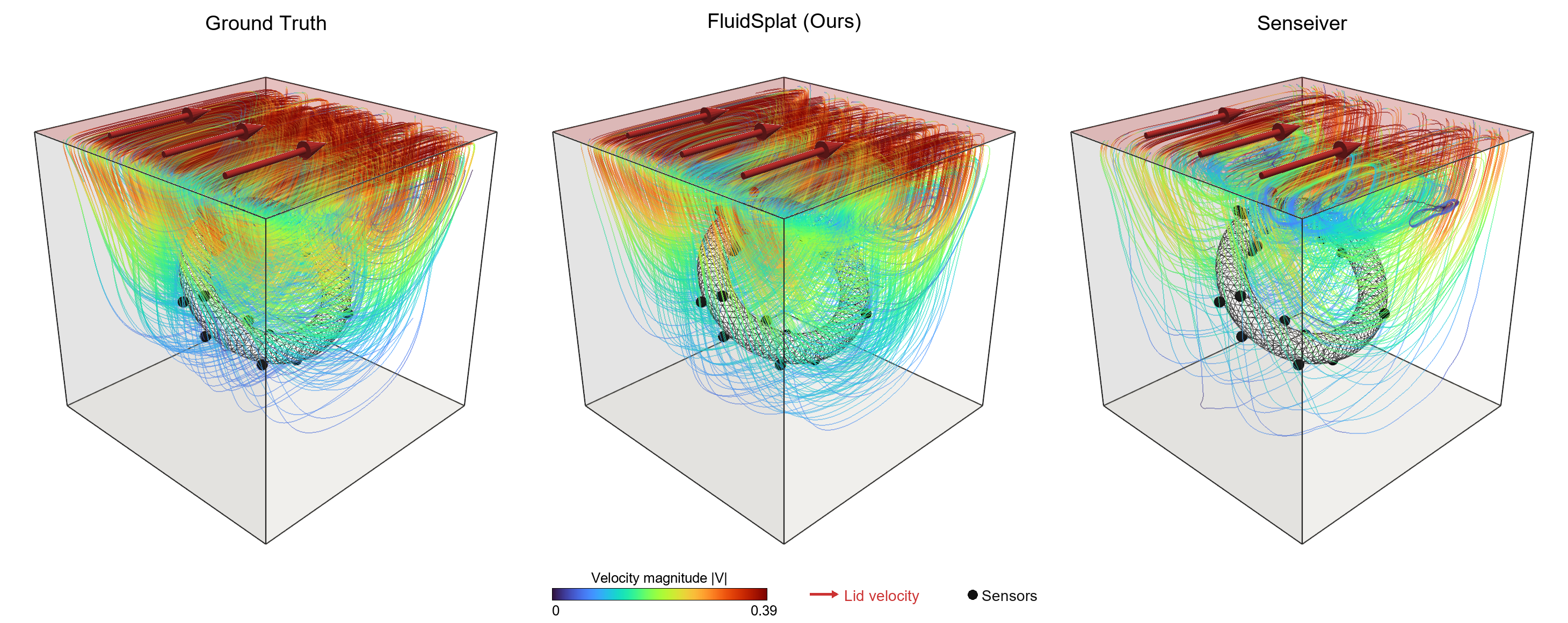}
\caption{Qualitative reconstructions on the cylinder, AirfRANS, and
FlowBench LDC-3D benchmarks.  Each panel shows ground truth, \FS{}
prediction, and absolute error or baseline comparison.  Gold dots mark
sensors in (a)--(b); \FS{} recovers fine-scale structures while
baselines are overly smooth.  PhySense-Car is shown separately in
Figure~\ref{fig:physense_car}.}
\label{fig:qualitative_all}
\end{figure}

We extend evaluation to three dimensions using the FlowBench
LDC-3D benchmark~\cite{flowbench2024}: 1{,}000 lid-driven cavity
simulations on a $64^3$ grid with randomly placed internal obstacles
and Reynolds numbers spanning $\text{Re}\in[10, 997]$.  Each case has
four output channels (velocity $u,v,w$ and pressure $p$) and the
obstacle geometry is encoded as a signed distance field (SDF).  We use
a 700/100/200 train/val/test split.  Sensors are placed uniformly at
random among the fluid cells.

Table~\ref{tab:ldc3d} reports the relative $L^2$ error across all
four channels, averaged over five training seeds, as the sensor count
$N$ varies from 5 to 40.  \FS{} achieves the lowest error at every
sensor count, with 28\% relative improvement over FLRONet (the
strongest baseline) at $N\!=\!20$.  All neural methods substantially
outperform POD-Ridge, whose relative error exceeds 1.0 due to the
complex 3D obstacle geometries.
Figure~\ref{fig:qualitative_all} (bottom) provides a qualitative comparison
via 3D streamlines for a representative test case; \FS{} preserves
the complex recirculation patterns visible in the ground truth, while
Senseiver produces an overly smooth velocity field.

\subsection{PhySense-Car: 3D Surface-Pressure Reconstruction}
\label{sec:physense_car}

To complement the volumetric LDC-3D benchmark, we evaluate \FS{} on
PhySense-Car~\cite{ma2025physense}, a 3D car-surface pressure
reconstruction benchmark built on ShapeNet cars and OpenFOAM
simulations with $\sim\!95$k surface points per case.  This setting is
complementary to LDC-3D in two ways: the field lives on an
unstructured 3D surface rather than a volumetric grid, and the
pressure dynamic range is much wider (mean $\approx-75$, std
$\approx149$).  We follow the official sensor-only protocol of
PhySense, using the official random-sensor seed and the official
normalized relative-$L^2$ metric, with cases 1--75 for training and
76--100 for testing.

Table~\ref{tab:physense_car} reports the normalized relative $L^2$
error at sensor counts $N\in\{15,30,50,100,200\}$, where the four
baseline numbers are taken directly from the PhySense
paper~\cite{ma2025physense} Table~3.  \FS{} improves over the
strongest released baseline (PhySense-opt) at every sensor count,
using a single checkpoint trained for 301 epochs and no test-time
optimization.
Figure~\ref{fig:physense_car} compares ground truth, \FS{}, and the
released PhySense base checkpoint on two held-out cases at $N=50$:
PhySense leaves spatially structured errors over the wheels, roof,
and rear body, while \FS{} concentrates errors in smaller
high-gradient regions.  Full protocol details (normalization, seed,
sensor masking) are in Appendix~\ref{app:physense_car}.

\begin{table}[!hbpt]
\centering
\caption{PhySense-Car surface-pressure reconstruction (normalized
relative $L^2$, lower is better).  Baseline numbers are quoted from
PhySense~\cite{ma2025physense} Table~3; \FS{} uses the same
sensor-only protocol with the official random-sensor seed.}
\label{tab:physense_car}
\small
\begin{tabular}{lccccc}
\toprule
Method & $N\!=\!15$ & $N\!=\!30$ & $N\!=\!50$ & $N\!=\!100$ & $N\!=\!200$ \\
\midrule
Senseiver           & 0.1053 & 0.1022 & 0.1018 & 0.1012 & 0.1009 \\
DiffusionPDE        & 0.2095 & 0.1055 & 0.0987 & 0.0966 & 0.0967 \\
PhySense            & 0.0465 & 0.0416 & 0.0395 & 0.0382 & 0.0375 \\
PhySense-opt        & 0.0386 & 0.0372 & 0.0370 & 0.0370 & 0.0369 \\
\midrule
\FS{} (ours)        & \textbf{0.0097} & \textbf{0.0086} & \textbf{0.0084}
                    & \textbf{0.0079} & \textbf{0.0073} \\
\bottomrule
\end{tabular}
\end{table}

\begin{figure}[!hbpt]
\centering
\includegraphics[width=\linewidth]{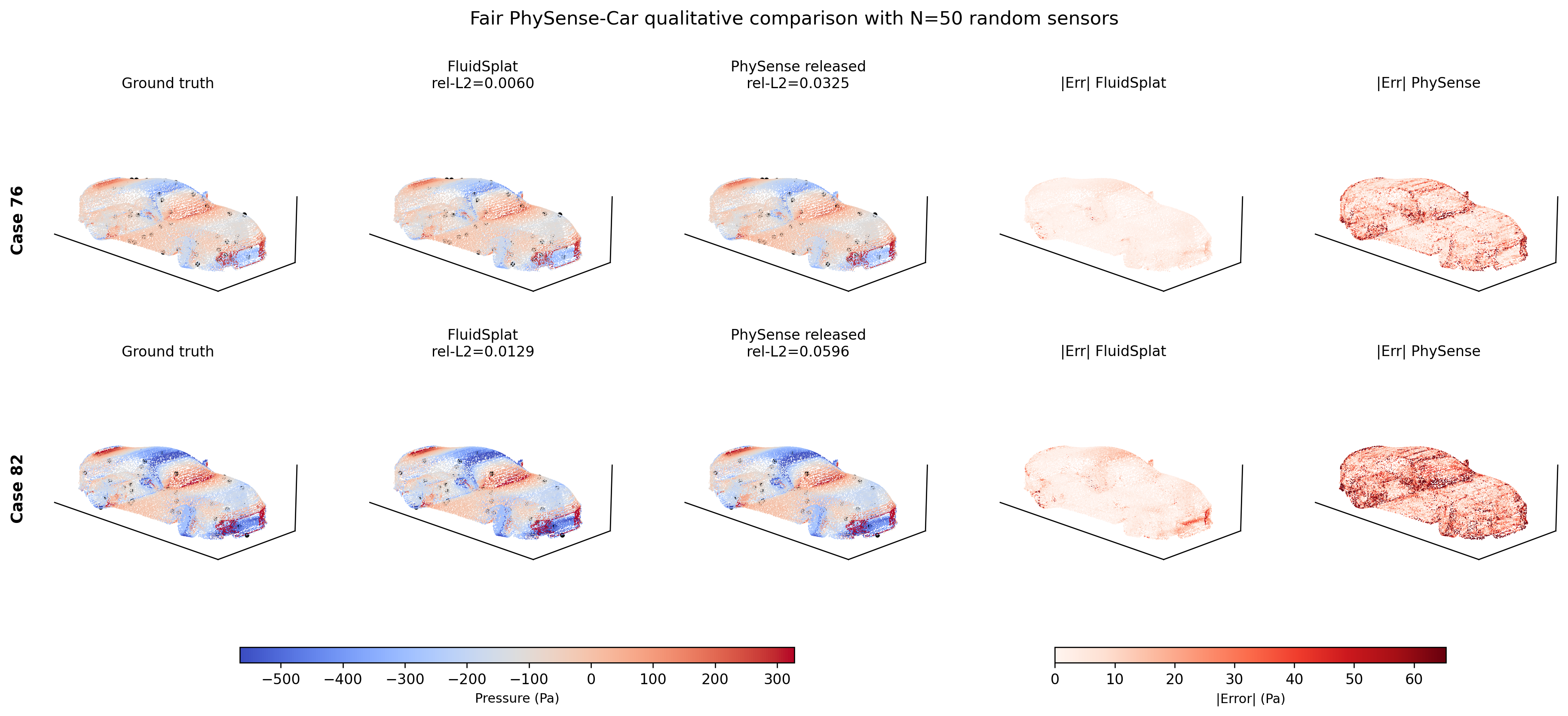}
\caption{PhySense-Car qualitative comparison under the released
random-sensor protocol at $N=50$.  Columns show ground truth, \FS{},
the released PhySense base-model prediction, absolute error of
\FS{}, and absolute error of PhySense.  Black dots mark identical
sampled pressure sensors in all prediction panels.}
\label{fig:physense_car}
\end{figure}

\subsection{Qualitative Reconstruction}
\label{sec:qualitative}
Figure~\ref{fig:qualitative_all} groups the qualitative reconstructions
on the cylinder, AirfRANS, and LDC-3D benchmarks: the cylinder vortex
street, AirfRANS surface pressure on an irregular mesh, and LDC-3D
streamlines from sparse sensors.  Error maps concentrate in
high-gradient regions near the body and wake, consistent with the
theory diagnostic and ablations.  The PhySense-Car visualization is
already shown in Figure~\ref{fig:physense_car} above.  The learned
primitive scaffold is deferred to Appendix~\ref{app:qualitative}.

\subsection{Theory Validation, Ablations, and Cost}
\begin{figure}[!hpbt]
\centering
\includegraphics[width=\linewidth]{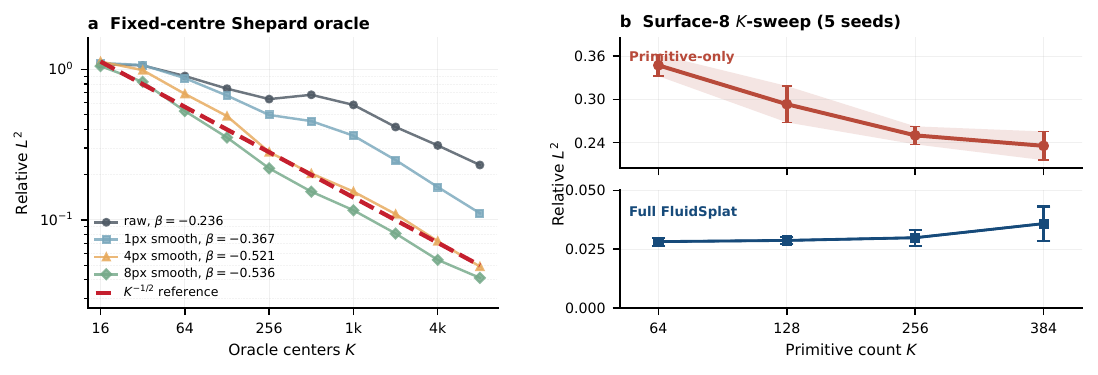}
\caption{Theory validation on the Senseiver cylinder
data.  Left: fixed-center Gaussian Shepard oracle on raw and smoothed
vorticity frames; controlled smoothing moves the empirical exponent
toward the $K^{-1/2}$ first-order reference.  Right: Surface-8 trained
K-sweep (five seeds).  Primitive-only error
improves with $K$ (bias reduction), while the full model is best at
$K\!=\!64$ and degrades at larger $K$, consistent with the
bias--variance trade-off of Corollary~\ref{thm:opt_K}.  The main table
uses $K=384$; the sweep is a single-axis diagnostic.  Exact numbers in
Appendix~\ref{app:additional_tables}.}
\label{fig:nmi_theory_validation}
\end{figure}
\paragraph{Theory validation.}
We separate theory-facing diagnostics from the benchmark tables.  The
oracle experiment in Figure~\ref{fig:nmi_theory_validation} (left)
fixes quasi-uniform centers on the Senseiver cylinder data, reads
amplitudes directly from the full field, and removes optimization
noise.  The raw vorticity field converges slowly due to high-frequency
structure; controlled smoothing moves the empirical exponent toward the
$K^{-1/2}$ first-order reference predicted by
Proposition~\ref{prop:norm_rate} (details in Appendix~\ref{app:additional_tables}).

\begin{table}[t]
\centering
\caption{Controlled AirfRANS Full-8 ablation.  Each row changes one
component of the selected \FS{} configuration and reports test
clean reconstruction error over seeds 1--3 under the same model family
and controlled ablation protocol (the main table reports the final
5-seed benchmark configuration).  Lower is better.}
\label{tab:ablation}
\small
\begin{tabular}{lcc}
\toprule
Variant & Parameters & clean\_recon\_full \\
\midrule
Full \FS{} & 0.236M & \textbf{0.2367 $\pm$ 0.0451} \\
w/o global token & 0.212M & 0.2759 $\pm$ 0.0358 \\
w/o state features & 0.235M & 0.2992 $\pm$ 0.0171 \\
w/o query-to-state attention & 0.210M & 0.3446 $\pm$ 0.0143 \\
residual-only (no primitive output) & 0.209M & 0.3530 $\pm$ 0.0130 \\
primitive-only (no residual decoder) & 0.114M & 0.6524 $\pm$ 0.0158 \\
w/o observation consistency & 0.236M & 0.2620 $\pm$ 0.0121 \\
\bottomrule
\end{tabular}
\end{table}

\begin{table}[t]
\centering
\caption{AirfRANS sensor-count curve (5 seeds).  The 8-sensor
position is tuned; 4/12/16 use analogous arclength schedules.}
\label{tab:airfrans-sensors}
\small
\begin{tabular}{lcccc}
\toprule
\# Sensors & 4 & 8 & 12 & 16 \\
\midrule
\FS{} (ours) & \textbf{0.2997 $\pm$ 0.0328}
             & \textbf{0.2518 $\pm$ 0.0389}
             & \textbf{0.2833 $\pm$ 0.0369}
             & \textbf{0.2813 $\pm$ 0.0521} \\
DeepONet     & 0.3108 $\pm$ 0.0215
             & 0.3289 $\pm$ 0.0330
             & 0.3320 $\pm$ 0.0121
             & 0.3171 $\pm$ 0.0312 \\
\bottomrule
\end{tabular}
\end{table}

\begin{table}[!hbpt]
\centering
\caption{Model size and inference cost on AirfRANS Full-8.  Forward
time is end-to-end (sensor encoding, primitive prediction, and query
decoding) with batch size 1 and 16{,}384 query points on the same GPU.  This table is
reported on AirfRANS rather than cylinder because AirfRANS exercises
the full multi-channel, geometry-conditioned architecture.}
\label{tab:cost}
\small
\begin{tabular}{lcc}
\toprule
Method & Parameters & Forward time (ms) \\
\midrule
\FS{} (ours) & 0.236M & 2.57 $\pm$ 0.20 \\
DeepONet & 0.325M & 2.02 $\pm$ 0.28 \\
RecFNO (reimpl.) & 17.136M & 3.63 $\pm$ 0.97 \\
FLRONet (reimpl.) & 1.332M & 2.59 $\pm$ 0.17 \\
Senseiver official-adapter & 0.945M & 6.12 $\pm$ 0.75 \\
\bottomrule
\end{tabular}
\end{table}

\paragraph{K-sweep.}
Figure~\ref{fig:nmi_theory_validation} (right) shows the trained
model on Surface-8 as $K$ varies.  Primitive-only error improves with
$K$ (bias reduction), while the full model is best at $K\!=\!64$
($0.0282$) and degrades at larger $K$ ($0.0358$ at $K\!=\!384$).
This pattern is consistent with the bias--variance trade-off of
Corollary~\ref{thm:opt_K}: the idealized bound predicts a small optimal
primitive count when $N\!=\!8$.  This supports the view of Gaussian
primitives as a structured scaffold rather than a standalone solution.  The sweep holds all non-$K$
hyperparameters fixed at their jointly tuned $K\!=\!384$ values; it is
a single-axis diagnostic, not a full configuration search.  The main
cylinder table retains the $K\!=\!384$ configuration.

\paragraph{Ablation studies.}
We use AirfRANS Full-8 as the primary ablation dataset because it is
the hardest and most realistic setting in the paper: unstructured
airfoil meshes, geometry-conditioned inputs, three physical output
channels, and surface-only pressure observations.  Cylinder Surface-8
is kept as an auxiliary sanity check in the appendix.  The controlled
AirfRANS ablations remove one design component at a time: (i) the
Gaussian primitive contribution, (ii) the residual decoder, (iii)
query-to-state cross-attention, (iv) global-token/state-feature
conditioning, and (v) observation-consistency loss.  The intended
interpretation is not merely that \FS{} has more parameters, but that
explicit Gaussian state plus an attention-conditioned residual decoder
improves sparse-sensor reconstruction under matched training data and
evaluation metrics.  In particular, the Gaussian primitive field is
not presented as a standalone universal solution; it is a structured,
interpretable spatial scaffold whose best performance comes from being
paired with the residual decoder. The ablation uses three seeds as a
controlled diagnostic.

\paragraph{Sensor-count scaling.}
Table~\ref{tab:airfrans-sensors} shows the AirfRANS sensor-count curve.  \FS{}
improves sharply from 4 to 8 sensors and remains consistently better
than DeepONet at 12 and 16.  The non-monotone trend suggests that
placement and model capacity matter in addition to sensor count.

\section{Conclusion}
\label{sec:discussion}

We introduced \FS{}, pairing a Gaussian primitive scaffold with a
state-conditioned residual decoder for sparse-sensor flow
reconstruction.  The accompanying theory identifies a first-order saturation of the
normalized scaffold and a variance bottleneck on primitive capacity, jointly justifying the architecture design.  \FS{} achieves the best mean error on all four benchmarks evaluated, including the released PhySense-Car sensor-only protocol as an additional 3D surface-field test.

\textbf{Limitation and Discussion.} The LDC-3D benchmark confirms 3D applicability on steady volumetric fields, and PhySense-Car provides an additional 3D surface-pressure check under the released evaluation pipeline; both settings remain steady-state, and extending to unsteady 3D sequences (e.g.\ turbulent snapshots) remains future work.  The theory is dimension-parametric (Appendix~\ref{app:dimension}) and places no inherent 2D restriction.  The optimal least square analysis fixes primitive centers and uses an idealized observation model, serving as a mechanistic guide rather than a finite-sample guarantee.  Because sparse-sensor reconstructions may enter safety-critical loops, uncertainty calibration remains an important companion.

\bibliographystyle{plainnat}
\bibliography{references}

\newpage
\appendix

\section{Notation and Preliminaries}
\label{app:notation}

\paragraph{Asymptotic notation}
For symmetric positive semidefinite matrices $A$ and $B$, we write $A \asymp B$ if there exist constants $c_1,c_2>0$ independent of the main parameters, such that 
\[
c_1 B \preceq A \preceq c_2 B.
\]
where the inequalities are understood in the positive semidefinite sense. For nonnegative scalars, we write $a\lesssim b$ if $a\le Cb$ for a constant $C$ independent of the main parameters, and $a\asymp b$ if both $a\lesssim b$ and $b\lesssim a$ hold.

\paragraph{Gaussian primitive classes.}
Let $\varphi_k$ denote a Gaussian basis function of the form in~\eqref{eq:basis}. Denote $\bm{g}_{K}(\bx)=\sum_{k=1}^{K}\bm{c}_{k}\varphi_{k}(\bx)$ as  a $K$-term unnormalized Gaussian primitive as in~\eqref{eq:basis}. Then denote this class as $\mathcal{G}_{K}=\{g_K;c_k,\mu_k,\sigma_k\}$. Also, we denote the normalized Gaussian primitive class as $\mathcal{F}_{K}=\{f_K=\sum_{k=1}^K\psi_k(x)\bm{a}_k\}$ as in~\ref{eq:model}.

\paragraph{Quasi-uniformity.}
Let $X_K=\{\bmu_k\}_{k=1}^K\subset\Omega$ be a set of centers in a bounded Lipschitz domain $\Omega\subset\mathbb R^d$.  Its fill distance and separation radius are
\[
h_K
:=
\sup_{\bx\in\Omega}\min_{1\le k\le K}\|\bx-\bmu_k\|,
\qquad
q_K
:=
\frac12\min_{i\ne j}\|\bmu_i-\bmu_j\|.
\]
A family $\{X_K\}_{K\ge1}$ is quasi-uniform if there exists a constant $\rho<\infty$, independent of $K$, such that $h_K\le \rho q_K$ .
On a fixed bounded Lipschitz domain, quasi-uniformity implies $h_K\asymp K^{-1/d}$, with constants depending only on $\Omega$, $d$, and $\rho$ \cite{wendland2005}.

\section{Approximation Theory}
\label{app:approx}

\subsection{Universal Approximation}

\begin{proposition}[Universal approximation]
\label{prop:universal}
The unions $\bigcup_{K\ge1}\mathcal G_K$ and
$\bigcup_{K\ge1}\mathcal F_K$ are dense in
$L^2(\Omega;\mathbb R^C)$.
\end{proposition}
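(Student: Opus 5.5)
The plan is to reduce both density claims to density of continuous (or smooth) functions in $L^2(\Omega)$, and then approximate such functions uniformly on $\overline\Omega$ by elements of each class. Since $\Omega$ is bounded, uniform approximation on $\overline\Omega$ implies $L^2$ approximation with constant $|\Omega|^{1/2}$. The vector-valued case $\mathbb R^C$ follows componentwise: for $\mathcal G_K$, use concatenation of the component approximants. For $\mathcal F_K$, the amplitudes $\bm a_k\in\mathbb R^C$ are already vector-valued, so the construction below works verbatim with vector amplitudes. So fix $\eta>0$ and $f\in L^2(\Omega)$, and choose $u\in C_c^\infty(\mathbb R^d)$ with $\|f-u\|_{L^2(\Omega)}<\eta$.

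\emph{Unnormalized class.} I would use the classical mollifier/quadrature argument. Let $\Phi_\delta(\bx)=\exp(-\|\bx\|^2/(2\delta^2))$ be the isotropic Gaussian basis with $\Sigma=\delta^2 I$, and set $\kappa_\delta=(2\pi\delta^2)^{-d/2}\Phi_\delta$. Then $u*\kappa_\delta\to u$ uniformly as $\delta\to0$, because $u$ is uniformly continuous with compact support. For fixed $\delta$, the integral
\[
u*\kappa_\delta(\bx)=\int u(\bm y)\kappa_\delta(\bx-\bm y)\,d\bm y
\]
is approximated uniformly in $\bx\in\overline\Omega$ by a Riemann sum $\sum_k u(\bm y_k)\kappa_\delta(\bx-\bm y_k)|Q_k|$ over a fine cube partition of $\mathrm{supp}\,u$. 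The integrand is jointly uniformly continuous on a compact set, so this convergence is uniform. That Riemann sum lies in $\mathcal G_K$ with $c_k=u(\bm y_k)|Q_k|(2\pi\delta^2)^{-d/2}$ and $\bmu_k=\bm y_k$. One caveat is that centers outside $\Omega$ may occur, whereas the method section requires $\bmu_k\in\Omega$. I would handle this by choosing $u$ supported in a slightly shrunk compact subset of $\Omega$, which is still dense in $L^2(\Omega)$. Alternatively one could extend the class definition, but shrinking the support is cleaner.

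\emph{Normalized class.} Here I would use the Shepard construction of Proposition~\ref{prop:norm_rate} in a qualitative form, with $u$ now taken in $C(\overline\Omega)$ with modulus of continuity $\omega$. The steps are:
\begin{enumerate}
\item Take quasi-uniform centers $X_K$ with fill distance $h=h_K\to0$, widths $\Sigma_k=\delta^2 I$, equal weights $w_k=1$, and amplitudes $a_k=u(\bmu_k)$.
\item Since $\sum_k\psi_k=1$ and $\psi_k\ge0$, write
\[
|u(\bx)-\textstyle\sum_k\psi_k(\bx)u(\bmu_k)|\le\sum_k\psi_k(\bx)\,|u(\bx)-u(\bmu_k)|.
\]
\item Split the sum into near centers, with $\|\bx-\bmu_k\|\le r$, which contribute at most $\omega(r)$, and far centers, which contribute at most $2\|u\|_\infty\sum_{\text{far}}\psi_k(\bx)$.
\item Bound the far mass. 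The denominator is at least $e^{-h^2/(2\delta^2)}$, because some center lies within distance $h$ of $\bx$. Each far numerator is at most $e^{-r^2/(2\delta^2)}$. The number of centers is $K\lesssim h^{-d}$.
\item Hence
\[
\sum_{\text{far}}\psi_k\le C h^{-d}\exp\!\big(-(r^2-h^2)/(2\delta^2)\big).
\]
\item Choose $\delta=h$ and $r=h^{1/2}$. The far bound becomes $Ch^{-d}e^{-(h^{-1}-1)/2}\to0$, while the near contribution $\omega(h^{1/2})\to0$.
\end{enumerate}
This gives uniform convergence on $\overline\Omega$, and hence $L^2$ density.

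The main obstacle I expect is the far-field tail in step 5. Keeping the polynomial growth of the center count from overwhelming the Gaussian decay forces the scale separation $h\ll r$ with $\delta\sim h$. The remaining steps are routine.
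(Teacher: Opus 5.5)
Your proposal is correct and shares the paper's two-part skeleton: reduce to continuous functions on $\overline\Omega$, handle $\mathcal G_K$ by Gaussian-RBF density, and handle $\mathcal F_K$ by the Shepard construction ($\delta\asymp h_K$, $w_k=1$, $\bm a_k=\bm f(\bmu_k)$) with a near/far split. The differences lie in what you prove versus what the paper cites.

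For $\mathcal G_K$, the paper simply invokes the Park--Sandberg theorem. You instead reprove it by Gaussian mollification followed by uniform Riemann-sum quadrature. This makes the argument self-contained. By shrinking the support of $u$, it also guarantees $\bmu_k\in\Omega$, as the method section requires; the cited theorem, whose centers range over all of $\mathbb R^d$, does not by itself ensure this.

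For $\mathcal F_K$, the paper only gestures at ``Gaussian decay and quasi-uniform packing.'' Your explicit tail bound replaces the annulus counting of Lemma~\ref{lem:gaussian_moment} with the crude global count $K\lesssim h^{-d}$, compensated by the scale separation $r=h^{1/2}\gg h=\delta$. That is fully adequate for density, but it is lossy. For Lipschitz $u$ your argument yields only $O(h^{1/2})$, or $O(h\sqrt{\log(1/h)})$ with the best choice of $r$. The annulus count instead gives the sharp $O(h)$ localization that the first-order rate of Proposition~\ref{prop:norm_rate} needs.
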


\begin{proof}
For the unnormalized class, density follows from the universal
approximation theorem for Gaussian RBF networks~\cite{park1991}: finite
linear combinations of Gaussian radial basis functions are uniformly dense
in $C(\bar\Omega)$. Applying the scalar result componentwise gives density
in $C(\bar\Omega;\mathbb R^C)$. Since $C(\bar\Omega;\mathbb R^C)$ is dense
in $L^2(\Omega;\mathbb R^C)$, the union
$\bigcup_{K\ge1}\mathcal G_K$ is dense in $L^2(\Omega;\mathbb R^C)$.

For the normalized class, first take
$\bm f\in C(\bar\Omega;\mathbb R^C)$. Choose quasi-uniform centers
$X_K=\{\bmu_k\}_{k=1}^K$ with fill distance $h_K\to0$, set Gaussian
scales $\sigma_K\asymp h_K$, take $w_k=1$, and set
$\bm a_k=\bm f(\bmu_k)$. The weights are nonnegative and satisfy
$\sum_{k=1}^K\psi_k(\bx)=1$. Hence
\[
\bm f_K(\bx)-\bm f(\bx)
=
\sum_{k=1}^K
\psi_k(\bx)\bigl(\bm f(\bmu_k)-\bm f(\bx)\bigr).
\]
By the classical Shepard approximation argument~\cite{shepard1968},
$\bm f_K\to\bm f$ uniformly on $\bar\Omega$: the contribution from
nearby centers is controlled by the modulus of continuity of $\bm f$,
while the contribution from distant centers vanishes uniformly by
Gaussian decay and quasi-uniform packing. Therefore
$\bigcup_{K\ge1}\mathcal F_K$ is dense in
$C(\bar\Omega;\mathbb R^C)$, and hence dense in
$L^2(\Omega;\mathbb R^C)$.
\end{proof}

\subsection{Proof of Theorem~\ref{thm:rate}}
\label{app:proof_rate}

We prove~\eqref{eq:rate} for scalar $f$ using isotropic Gaussian networks; the vector case follows componentwise.

\begin{proof}
By the extension theorem for bounded Lipschitz domains, there exists a bounded extension operator $E$ such that $Ef|_{\Omega}=f$ and
\[
\|Ef\|_{H^{s+\varepsilon}(\mathbb{R}^d)}
\le
C_{\Omega}
\|f\|_{H^{s+\varepsilon}(\Omega)}.
\]
Using the standard identification $H^{s+\varepsilon}(\mathbb{R}^d)=F^{s+\varepsilon}_{2,2}(\mathbb{R}^d)$
and the Triebel--Lizorkin embedding
\[
F^{s+\varepsilon}_{2,2}(\mathbb{R}^d) \hookrightarrow F^s_{\tau,q}(\mathbb{R}^d), \qquad \frac{1}{\tau}=\frac{1}{2}+\frac{s}{d}, \qquad \frac{1}{q}=1+\frac{s}{d}
\]
we obtain
\[
|Ef|_{F^s_{\tau,q}(\mathbb{R}^d)}
\le
C
\|Ef\|_{H^{s+\varepsilon}(\mathbb{R}^d)}, 
\]

Applying Theorem~9 of~\cite{hangelbroek2010nonlinear} with
$p=2$ to $Ef$, there exists a $K$-term Gaussian network $g_K$ with adaptive centers and adaptive tension parameters such that
\[
\|Ef-g_K\|_{L^2(\mathbb{R}^d)}
\le
C K^{-s/d}
|Ef|_{F^s_{\tau,q}(\mathbb{R}^d)}.
\]

Combining the previous estimates gives
\[
\|Ef-g_K\|_{L^2(\mathbb{R}^d)}
\le
C K^{-s/d}
\|f\|_{H^{s+\varepsilon}(\Omega)}.
\]

Restricting the estimate to $\Omega$ yields
\[
\|f-g_K\|_{L^2(\Omega)}
\le
\|Ef-g_K\|_{L^2(\mathbb{R}^d)}
\le
C K^{-s/d}
\|f\|_{H^{s+\varepsilon}(\Omega)}.
\]

Finally, isotropic Gaussians are a special case of the Gaussian
primitive in~\eqref{eq:basis} by choosing
$\Sigma_k=\sigma_k^2 I$. Hence the same approximant belongs to
$\mathcal G_K$.

\end{proof}

\subsection{Proof of Proposition~\ref{prop:norm_rate}}
\label{app:proof_norm}

We first prove a localization estimate for the normalized Gaussian Shepard weights.  It states that, although each Gaussian has global support, the normalized weights concentrate their moments at scale $h_K$.

\begin{lemma}[Moment localization]
\label{lem:gaussian_moment}
Let $\Omega\subset\mathbb{R}^d$ be a bounded Lipschitz domain, and let $X_K=\{\bmu_k\}_{k=1}^K\subset\Omega$ be quasi-uniform. Assume further that the Gaussian amplitudes and covariances satisfy $0<w_-\le w_k\le w_+<\infty$,
and $ c_\Sigma h_K^2 I \preceq \Sigma_k \preceq C_\Sigma h_K^2 I, k=1,\dots,K$ with constants  $c_\Sigma, C_\Sigma$ independent of $K$. Then, for every $m\ge 0$,
\[
\sup_{\bx\in\Omega}
\sum_{k=1}^K
\psi_k(\bx)|\bx-\bmu_k|^m
\le
C_m h_K^m,
\]
where $C_m$ is independent of $K$.
\end{lemma}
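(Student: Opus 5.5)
Fix $\bx\in\Omega$. The plan is to bound the numerator $\sum_k w_k\varphi_k(\bx)|\bx-\bmu_k|^m$ from above and the denominator $\sum_j w_j\varphi_j(\bx)$ from below, both in units of $h_K$, with constants independent of $\bx$ and $K$. The covariance assumption gives two-sided Gaussian bounds:
\[
\exp\!\Bigl(-\tfrac{\|\bx-\bmu_k\|^2}{2c_\Sigma h_K^2}\Bigr)\le\varphi_k(\bx)\le \exp\!\Bigl(-\tfrac{\|\bx-\bmu_k\|^2}{2C_\Sigma h_K^2}\Bigr).
\]
This reduces everything to isotropic Gaussians at scale $h_K$.

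\emph{Denominator.} By the definition of the fill distance, there is a center $\bmu_{k_0}$ with $\|\bx-\bmu_{k_0}\|\le h_K$. Keeping only this term gives
\[
\sum_j w_j\varphi_j(\bx)\ge w_- e^{-1/(2c_\Sigma)},
\]
a positive constant independent of $K$ and $\bx$.

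\emph{Numerator.} Partition the centers into dyadic shells
\[
S_0=\{k:\|\bx-\bmu_k\|<h_K\},\qquad S_\ell=\{k:2^{\ell-1}h_K\le\|\bx-\bmu_k\|<2^\ell h_K\},\ \ell\ge1.
\]
The balls $B(\bmu_k,q_K)$ are pairwise disjoint, and quasi-uniformity gives $q_K\ge h_K/\rho$. A volume comparison (balls of radius $h_K/\rho$ placed inside $B(\bx,(2^\ell+1/\rho)h_K)$) then yields the packing bound
\[
\#S_\ell\le C_{d,\rho}\,2^{\ell d}.
\]
Note that this needs no boundary regularity of $\Omega$, because the comparison is made in $\mathbb R^d$. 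Summing over shells,
\[
\sum_k w_k\varphi_k(\bx)\|\bx-\bmu_k\|^m\le w_+h_K^m\sum_{\ell\ge0}C_{d,\rho}2^{\ell d}2^{\ell m}\exp\!\Bigl(-\tfrac{4^{\ell-1}}{2C_\Sigma}\Bigr).
\]
The series converges, since the Gaussian decay in $\ell$ dominates the geometric growth, to a constant depending only on $m,d,\rho,C_\Sigma$.

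Dividing the numerator bound by the denominator bound gives $C_m h_K^m$ with $C_m$ independent of $K$, uniformly in $\bx$; taking the supremum over $\bx$ finishes the proof. The main obstacle is the packing bound $\#S_\ell\lesssim 2^{\ell d}$: this is exactly where quasi-uniformity enters, through $q_K\gtrsim h_K$. Without it, clustered centers could inflate the far-field mass relative to the single near-center term that controls the denominator.
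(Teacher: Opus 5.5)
Your proof is correct and follows essentially the same argument as the paper. It bounds the denominator from below using the single center within $h_K$ of the point, and bounds the numerator via a shell decomposition whose counts are controlled by packing with $q_K\gtrsim h_K$. The only difference is that you use dyadic shells with $\#S_\ell\lesssim 2^{\ell d}$ where the paper uses unit-width annuli with $\#A_n\le C(n+1)^d$; your explicit volume comparison fills in a step the paper leaves implicit. One cosmetic fix: on $S_0$ the distance $\|x-\mu_k\|$ can be arbitrarily small, so there you should use $\varphi_k\le 1$ instead of $\exp(-4^{-1}/(2C_\Sigma))$, which changes only the constant.
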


\begin{proof}
\label{app:gaussian_moment}
Fix $\bx\in\Omega$. By the definition of $h_K$, there exists $k_{\bx}$ such that $|\bx - {\boldsymbol{\mu}_{k_{\bx}}} | \le h_K$.

Since $\Sigma_k\preceq C_\Sigma h_K^2I$, $\varphi_k(\bx) \le \exp\!\left( -\frac{|\bx-\bmu_k|^2}{2C_\Sigma h_K^2} \right),$ as a result
\[
\sum_{k=1}^K
w_k\varphi_k(\bx)|\bx-\bmu_k|^m
\le
w_+
\sum_{k=1}^K
\exp\!\left(
-\frac{|\bx-\bmu_k|^2}{2C_\Sigma h_K^2}
\right)
|\bx-\bmu_k|^m .
\]

Decompose the centers into annuli
\[
A_n(\bx)
:=
\left\{
\bmu_k:
nh_K\le |\bx-\bmu_k|<(n+1)h_K
\right\},
\qquad n=0,1,2,\dots .
\]
By quasi-uniformity, $q_K\gtrsim h_K$, and the number of centers in each annulus satisfies
\[
\#A_n(\bx)\le C(n+1)^d .
\]
Therefore,
\[
\sum_{k=1}^K
w_k\varphi_k(\bx)|\bx-\bmu_k|^m
\le
C h_K^m
\sum_{n=0}^{\infty}
(n+1)^{m+d}
\exp(-c n^2).
\]

Since the Gaussian $\exp(-cn^2)$ dominates any polynomial growth, the series is finite. Then
\[
\sum_{k=1}^K
w_k\varphi_k(\bx)|\bx-\bmu_k|^m
\le
C_m h_K^m .
\]

On the other hand, the lower covariance bound gives $\Sigma_{k_{\bx}} \succeq c_\Sigma h_K^2 I$, we have $ \varphi_{k_{\bx}}(\bx) \ge \exp\!(-1/2c_\Sigma)$. Therefore,
$ \sum_{j=1}^K w_j\varphi_j(\bx) \ge w_-\exp\!\left(-\frac{1}{2c_\Sigma}\right)$. Dividing by the uniform lower bound on the denominator gives
\[
\sum_{k=1}^K
\psi_k(\bx)|\bx-\bmu_k|^m
\le
C_m h_K^m .
\]
\end{proof}

We now apply the moment estimate to the scaffold $f_{\mathrm{prim}}(x)=\sum_k\psi_k(x)f(\mu_k)$ and prove Proposition~\ref{prop:norm_rate}.

\begin{proof}
Since $s>1+d/2$, Sobolev embedding gives
\[
\|\nabla f\|_{L^\infty(\Omega)}
\le
C\|f\|_{H^s(\Omega)}.
\]
In particular, the point values $f(\bmu_k)$ are well-defined. Using the partition-of-unity property $\sum_{k=1}^K\psi_k(\bx)=1$, we have
\[
f(\bx)-f_{\mathrm{prim}}(\bx)
=
\sum_{k=1}^K
\psi_k(\bx)\bigl(f(\bx)-f(\bmu_k)\bigr).
\]
By the Lipschitz bound,
\[
|f(\bx)-f(\bmu_k)|
\le
\|\nabla f\|_{L^\infty(\Omega)}
|\bx-\bmu_k|.
\]
Therefore,
\[
|f(\bx)-f_{\mathrm{prim}}(\bx)|
\le
\|\nabla f\|_{L^\infty(\Omega)}
\sum_{k=1}^K
\psi_k(\bx)|\bx-\bmu_k|.
\]
Applying Lemma~\ref{lem:gaussian_moment} with $m=1$ gives
\[
|f(\bx)-f_{\mathrm{prim}}(\bx)|
\le
C h_K\|\nabla f\|_{L^\infty(\Omega)}.
\]
Taking the $L^2(\Omega)$ norm yields
\[
\|f-f_{\mathrm{prim}}\|_{L^2(\Omega)}
\le
C h_K\|f\|_{H^s(\Omega)}.
\]
Finally, quasi-uniformity on a fixed bounded domain gives
$h_K\asymp K^{-1/d}$. Hence
\[
\|f-f_{\mathrm{prim}}\|_{L^2(\Omega)}
\le
C K^{-1/d}\|f\|_{H^s(\Omega)}.
\]
\end{proof}

\subsection{Residual Decoder}

\begin{proposition}[Residual expressivity]
\label{prop:residual}
Let $\mathcal H$ be a residual decoder class dense in $L^2(\Omega;\mathbb R^C)$.  Then, for any fixed $K\ge 1$, the sum class $
\mathcal F_K+\mathcal H := \{\bm u+\bm h:\bm u\in\mathcal F_K,\ \bm h\in\mathcal H\}$ is dense in $L^2(\Omega;\mathbb R^C)$. Moreover, for any
$\bm f\in L^2(\Omega;\mathbb R^C)$,
\[
\inf_{\bm g\in\mathcal F_K+\mathcal H}
\|\bm g-\bm f\|_{L^2}
\le
\min\left\{
\inf_{\bm u\in\mathcal F_K}\|\bm u-\bm f\|_{L^2},
\inf_{\bm h\in\mathcal H}\|\bm h-\bm f\|_{L^2}
\right\}.
\]
\end{proposition}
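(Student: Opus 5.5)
The plan is to exploit the fact that $\mathcal F_K+\mathcal H$ contains both $\mathcal F_K$ and $\mathcal H$, at least up to the trivial elements $\bm 0\in\mathcal H$ and the constant zero field in $\mathcal F_K$. The density claim then follows from the density of $\mathcal H$ alone, and the infimum bound is a monotonicity statement for infima over nested sets.

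First I check that $\bm 0\in\mathcal F_K$. Taking all amplitudes $\ba_k=\bm 0$ in \eqref{eq:model} gives $\bm f_{\mathrm{prim}}\equiv\bm 0$, for any admissible centers, covariances and weights. Hence $\mathcal H=\{\bm 0+\bm h:\bm h\in\mathcal H\}\subseteq\mathcal F_K+\mathcal H$. Because $\mathcal H$ is dense in $L^2(\Omega;\mathbb R^C)$, any superset of it is dense as well, which gives the first claim for every fixed $K\ge1$. It also gives
\[
\inf_{\bm g\in\mathcal F_K+\mathcal H}\|\bm g-\bm f\|_{L^2}\le\inf_{\bm h\in\mathcal H}\|\bm h-\bm f\|_{L^2}.
\]
In fact this right-hand side equals zero by density, so the stated minimum is trivially attained through the $\mathcal H$ branch.

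For completeness I would also show $\mathcal F_K\subseteq\mathcal F_K+\mathcal H$. This needs $\bm 0\in\mathcal H$, or at least that $\mathcal H$ contains functions of arbitrarily small norm. The latter already follows from density, since one can approximate $\bm 0$. Given $\bm u\in\mathcal F_K$ and $\delta>0$, I pick $\bm h\in\mathcal H$ with $\|\bm h\|_{L^2}<\delta$. The triangle inequality gives $\|\bm u+\bm h-\bm f\|\le\|\bm u-\bm f\|+\delta$. Taking the infimum over $\bm u$ and letting $\delta\to0$ yields the bound with the $\mathcal F_K$ branch. Combining the two inequalities gives the minimum.

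There is no real obstacle here. The only point requiring care is to avoid assuming $\bm 0\in\mathcal H$ without justification. The $\delta$-argument above handles that, and the argument depends on no structural property of $\mathcal H$ beyond its density in $L^2$.
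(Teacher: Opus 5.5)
Your proof is correct and follows the paper's argument: zero amplitudes give $\bm 0\in\mathcal F_K$, so $\mathcal H\subseteq\mathcal F_K+\mathcal H$, which yields both density and the $\mathcal H$-branch of the bound. The only difference is the $\mathcal F_K$-branch: the paper simply asserts $\bm 0\in\mathcal H$, an architectural assumption not among the stated hypotheses, whereas your $\delta$-argument needs only density. Strictly, that argument shows $\mathcal F_K$ lies in the closure of $\mathcal F_K+\mathcal H$ rather than in the set itself, which is all the bound requires. As you observe, the whole inequality is immediate anyway, since its left-hand side is $0$.
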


\begin{proof}
Since zero amplitudes are allowed, $\bm 0\in\mathcal F_K$.
Since the residual decoder can represent the zero function,
$\bm 0\in\mathcal H$. Hence
\[
\mathcal H\subset \mathcal F_K+\mathcal H,
\qquad
\mathcal F_K\subset \mathcal F_K+\mathcal H .
\]
Because $\mathcal H$ is dense in $L^2(\Omega;\mathbb R^C)$,
the sum class $\mathcal F_K+\mathcal H$ is also dense. The stated
inequality follows directly from the same two inclusions.
\end{proof}

This proposition is not a rate statement. It only records that adding the
residual decoder cannot reduce the approximation class. The purpose of the
Gaussian scaffold is therefore not universal approximation by itself, but to
provide a compact and spatially explicit low-dimensional structure, while the
residual decoder represents the remaining correction.

\section{Estimation Theory}
\label{app:estim}

We treat the bias--variance analysis per scalar component; summing
over $C$ channels introduces only a constant factor.

\subsection{Bias--variance decomposition}

\paragraph{Proof of Theorem~\ref{thm:bv}} \begin{proof}
Let $f_{K}^{*}\in V_{K}$ be the $L^{2}$-projection of $f$.
Since $(f-f_{K}^{*})\perp_{L^{2}}V_{K}$ and
$(f_{K}^{*}-\hat{f}_{K})\in V_{K}$, the Pythagorean
theorem gives
\begin{equation}\label{eq:bias-var-decomp}
\norm{f-\hat{f}_{K}}_{L^{2}}^{2}
=\norm{f-f_{K}^{*}}_{L^{2}}^{2}
+\norm{f_{K}^{*}-\hat{f}_{K}}_{L^{2}}^{2}.
\end{equation}

\paragraph{Bias.}
Since $\bm{f}_{K}^{*}$ is the $L^{2}$-best approximation in $V_{K}$,
it is bounded by the constructive approximant according to assumption (i) $ f_{K}^{*}= \arg\inf_{g\in V_K} \|f-g\|_{L^2}$. This gives the bias bound $\norm{f-f_{K}^{*}}_{L^{2}}^{2} \lesssim K^{-2s/d}\norm{\bm{f}}_{H^{s+\varepsilon}}^{2}$. 

\paragraph{Variance.}
Let $r_X:=(r_1,\ldots,r_N)^\top$ with $r_i=f(x_i)-f_K^*(x_i)$. At the observation points, $y=A c^*+r_X+\varepsilon$. Write $P=(A^\top A)^{-1}A^\top$. Then least square coefficient is
\[
\hat c=P y=
P(Ac^*+r_X+\varepsilon)=
c^*+P(r_X+\varepsilon),
\]
so the coefficient error is
\[
e:=\hat c-c^*=
P(r_X+\varepsilon).
\]
The corresponding function error inside \(V_K\) satisfies
\begin{equation}\label{eq:var-err}
\|f_K^*-\hat f_K\|_{L^2}^2
=\|\sum_k e_k \varphi_k(x)\|_{L^2}^2
= \sum_{j,l} e_j e_l \int_{\Omega} \varphi_j(x) \varphi_l(x) \, dx
=
e^\top G e .
\end{equation}

Taking expectation over the noise, and using
\(\mathbb E[\varepsilon]=0\), we get

\begin{equation}\label{eq:var-decomp}    
\mathbb E[e^\top G e]
=\mathbb E[(r_X+\varepsilon)^TP^T G P(r_X+\varepsilon)]
=r_X^\top P^\top G P r_X
+
\mathbb E[\varepsilon^\top P^\top G P\varepsilon].
\end{equation}

For the first term, residual term, by the lower spectral stability in assumption (ii) $G \preceq \frac{|\Omega|}{c_sN}A^\top A$. Therefore

\[
(Pr_X)^\top G(Pr_X)
\le
\frac{|\Omega|}{c_sN}
(Pr_X)^\top A^\top A(Pr_X)=
\frac{|\Omega|}{c_sN}
\|APr_X\|_2^2 \le \frac{|\Omega|}{c_sN}
\|r_X\|_2^2.
\]
where the last inequality uses that $AP$ is the Euclidean orthogonal projection onto $\mathrm{col}(A)$.

By assumption (iii) $\|r_X\|_2/\sqrt{N}\lesssim \|r\|_{H^s+\varepsilon}$ we obtain.
\begin{equation}\label{eq:var-residual-term}
r_X^\top P^\top G P r_X
\le
\frac{C_r}{c_s}\|r\|_{L^2}^2.
\end{equation}
Thus, the residual term is controlled by the same approximation bias by definition.

For the second term, noise term, use
\[
\mathbb E[\varepsilon^\top P^\top G P\varepsilon]=
\operatorname{tr}
\left(
P^\top G P\,\operatorname{Cov}(\varepsilon)
\right).
\]
Since \(\operatorname{Cov}(\varepsilon)\preceq\sigma^2_{\mathrm{noise}} I_N\),
\[
\mathbb E[\varepsilon^\top P^\top G P\varepsilon]
\le
\sigma^2_{\mathrm{noise}} \operatorname{tr}(P^\top G P)
=\sigma^2_{\mathrm{noise}} \operatorname{tr}(GPP^\top)
=\sigma^2_{\mathrm{noise}} \operatorname{tr}\left(G(A^\top A)^{-1}\right)
\]

The last equality holds for $P=(A^\top A)^{-1}A^\top$. Again by assumption (ii) $A^\top A \succeq c_s\frac{N}{|\Omega|}G$, we obatain
\[
G^{1/2}(A^\top A)^{-1}G^{1/2}
\preceq
\frac{|\Omega|}{c_sN}I_K.
\]

Thus
\[
\operatorname{tr}\left(G(A^\top A)^{-1}\right)=
\operatorname{tr}\left(
G^{1/2}(A^\top A)^{-1}G^{1/2}
\right)
\le
\frac{|\Omega|}{c_sN}K.
\]

As a results,

\begin{equation}\label{eq:var-noise-term}
\mathbb E[\varepsilon^\top P^\top G P\varepsilon]
\le
\frac{|\Omega|}{c_s}\sigma^2_{\mathrm{noise}} \frac KN .
\end{equation}

Combining \cref{eq:bias-var-decomp,eq:var-err,eq:var-decomp,eq:var-residual-term,eq:var-noise-term},

\[
\mathbb E\|f-\hat f_K\|_{L^2}^2
\le
\left(1+\frac{C_r}{c_s}\right)
\|f-f_K^*\|_{L^2}^2
+
\frac{|\Omega|}{c_s}\sigma^2_{\mathrm{noise}} \frac KN .
\]

This proves the theorem.
\end{proof}

\subsection{Simplified Bound and Optimal K}

\paragraph{Proof of Corollary~\ref{thm:opt_K}.}

\begin{proof}    
Treating $K$ as continuous (integer constraint contributes only a constant factor), the objective $F(K)=C_{\mathrm{bias}}^{2}K^{-2s/d}\norm{\bm{f}}_{H^{s+\varepsilon}}^{2}+C_{\mathrm{var}}\sigma^{2}_{\mathrm{noise}}K/N$
has a unique minimiser.  Setting $F'(K)=0$ and solving gives
$K^{(2s+d)/d}\propto(N/\sigma^{2})\norm{\bm{f}}_{H^{s+\varepsilon}}^{2}$;
substituting back gives
$\mathrm{bias}^{2}\sim\mathrm{variance}\sim(\sigma^{2}/N)^{2s/(2s+d)}\norm{\bm{f}}_{H^{s+\varepsilon}}^{2d/(2s+d)}$.
\end{proof}

\section{Dimension-Specific Analysis}
\label{app:dimension}

All theoretical results are parametric in $d$; the two cases of
practical interest specialise as follows.

\begin{center}
\renewcommand{\arraystretch}{1.3}
\small
\begin{tabular}{lcc}
\toprule
Quantity & $d=2$ & $d=3$ \\
\midrule
Fill distance & $h\sim K^{-1/2}$ & $h\sim K^{-1/3}$ \\
Unnorm.\ rate & $K^{-s/2}$ & $K^{-s/3}$ \\
Norm.\ rate & $K^{-1/2}$ & $K^{-1/3}$ \\
Optimal $K^{*}$ & $(N/\sigma^{2})^{1/(s+1)}$ & $(N/\sigma^{2})^{3/(2s+3)}$ \\
Optimized risk scale  & $(\sigma^{2}/N)^{s/(s+1)}$ & $(\sigma^{2}/N)^{2s/(2s+3)}$ \\
\bottomrule
\end{tabular}
\end{center}

\paragraph{Concrete values for $d=2$ (constants set to 1).}
For $N=4/8/16/32$, 
$s=1$: $K^{*}\approx 2/3/4/6$;
$s=2$: $K^{*}\approx 2/2/3/3$;
$s=3$: $K^{*}\approx 1/2/2/2$.

\paragraph{Concrete values for $d=3$ (constants set to 1).}
For $N=4/8/32/128$, 
$s=1$: $K^{*}\approx 2/3/8/18$;
$s=2$: $K^{*}\approx 2/2/4/8$;
$s=3$: $K^{*}\approx 2/2/3/5$.

These values are not direct hyperparameter choices; they indicate the
effective capacity scale implied by the bias--variance trade-off. In the
small-$N$ regime, only a few primitive degrees of freedom can be reliably
supported by the observations. This variance bottleneck motivates using the
primitives as a compact spatial scaffold, with the residual decoder carrying
the remaining fine-scale correction.

\section{Connection to Navier--Stokes Regularity}
\label{app:ns}

\begin{corollary}[Flow field approximation]
\label{cor:ns}
Consider the steady incompressible Navier--Stokes equations with small Reynolds number on
$\Omega\subset\R^{d}$ ($d=2$ or $3$) with $C^{m}$ boundary
($m\ge 2$).  Assume a classical solution
$\bm{u}\in H^{m+\varepsilon}(\Omega;\R^{d})$, $p\in H^{m-1+\varepsilon}(\Omega)$. Then
$\norm{\bm{u}-\bm{u}_{K}}_{L^{2}}=O(K^{-m/d})$ and
$\norm{p-p_{K}}_{L^{2}}=O(K^{-(m-1)/d})$ (for $m\ge 3$), with
estimation-optimal counts
$K^{*}_{\bm{u}}\sim(N/\sigma^{2})^{d/(2m+d)}$ and
$K^{*}_{p}\sim(N/\sigma^{2})^{d/(2(m-1)+d)}$.
\end{corollary}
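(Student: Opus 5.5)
The plan is to reduce Corollary~\ref{cor:ns} to Theorem~\ref{thm:rate}, Theorem~\ref{thm:bv} and Corollary~\ref{thm:opt_K}, applied componentwise. The only PDE input is the assumed Sobolev regularity of $(\bm u,p)$.

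\emph{Step 1 (velocity).} I would apply Theorem~\ref{thm:rate} to each component $u_j$, $j=1,\dots,d$. Since $u_j\in H^{m+\varepsilon}(\Omega)$ and $\Omega$ is bounded and Lipschitz (a $C^m$ boundary with $m\ge 2$ is in particular Lipschitz), taking $s=m$ gives a $K$-term Gaussian approximant $g_{K,j}$ with $\|u_j-g_{K,j}\|_{L^2}\le C K^{-m/d}\|u_j\|_{H^{m+\varepsilon}}$. Setting $\bm u_K=(g_{K,1},\dots,g_{K,d})$, the vector-valued remark in Theorem~\ref{thm:rate} yields
\[
\|\bm u-\bm u_K\|_{L^2}\le C\sqrt d\,K^{-m/d}\|\bm u\|_{H^{m+\varepsilon}}=O(K^{-m/d}).
\]
If each component uses its own $K$ primitives, the total count is at most $dK$. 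This changes only the constant.

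\emph{Step 2 (pressure).} I would apply the same argument to $p$ with $s=m-1$, which gives $\|p-p_K\|_{L^2}=O(K^{-(m-1)/d})$. I would also note why the paper restricts to $m\ge 3$: the stated rates are meaningful for $m-1>0$ in any case, and $m\ge 3$ presumably ensures $s=m-1\ge 2$, enough smoothness for the pressure class to sit in the classical-solution setting. I would mention that the small-Reynolds assumption serves only to justify the existence of such regular solutions. It is not used quantitatively.

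\emph{Step 3 (capacity).} Under assumptions (i)--(iii) of Theorem~\ref{thm:bv} for the chosen fixed dictionaries, assumption (i) is supplied by Steps~1--2 with $s=m$ for $\bm u$ and $s=m-1$ for $p$. Theorem~\ref{thm:bv} then gives the risk bound $K^{-2s/d}+\sigma^2K/N$ separately for each field, and Corollary~\ref{thm:opt_K} yields $K^*_{\bm u}\sim(N/\sigma^2)^{d/(2m+d)}$ and $K^*_p\sim(N/\sigma^2)^{d/(2(m-1)+d)}$.

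The main obstacle is conceptual rather than computational. Theorem~\ref{thm:rate} yields adaptive, nonlinear approximants, whereas Theorem~\ref{thm:bv} needs a fixed linear dictionary satisfying the same rate. I would handle this the way the paper does, by taking the rate as hypothesis (i) of Theorem~\ref{thm:bv} rather than deriving it. The estimation statement in the corollary is therefore conditional on that assumption.
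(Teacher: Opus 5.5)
Your proposal is correct and is essentially the paper's argument. The paper applies Theorem~\ref{thm:rate} with $s=m$ for $\bm u$ and $s=m-1$ for $p$, then substitutes into Corollary~\ref{thm:opt_K}; your explicit remark that hypothesis (i) of Theorem~\ref{thm:bv} (a fixed dictionary attaining the nonlinear rate) is being assumed rather than derived is a more careful statement of what the paper leaves implicit. Your speculation about the role of $m\ge 3$ is unnecessary: Theorem~\ref{thm:rate} only needs $s>0$, and the assumed $p\in H^{m-1+\varepsilon}$ already gives the pressure rate when $m=2$.
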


Apply Theorem~\ref{thm:rate} with $s=m$ (velocity) and $s=m-1$
(pressure); substitute into Theorem~\ref{thm:opt_K}. For smooth domains, smooth data, and regimes where classical steady
solutions exist, such Sobolev regularity assumptions are standard in the
Navier--Stokes regularity theory; see Temam~\cite{temam2024} and
Girault--Raviart~\cite{girault1986}.

\paragraph{Corner singularities.}
At a geometric corner with interior angle $\theta$, regularity is
limited to $s<1+\pi/\theta$ (Grisvard~\cite{grisvard1985};
Kozlov--Maz'ya--Rossmann~\cite{kozlov2000}): $s<1.5$ at a sharp
trailing edge ($\theta\approx 2\pi$), $s<2$ at a blunt one
($\theta\approx\pi$).  This caps the achievable rate on airfoils.

\paragraph{Boundary layers.}
Gradients scale as $\norm{\nabla\bm{u}}\sim Re^{1/2}$ in layers of
thickness $\delta\sim Re^{-1/2}$.  Adaptive learned centers can
concentrate near walls, paralleling mesh refinement; covering the
layer volume $\sim\delta\cdot L^{d-1}$ with diameter-$\delta$
Gaussians suggests $O(Re^{(d-1)/2})$ primitives for the layer alone,
though the exact scaling is geometry-dependent.

\section{Experimental Setup}
\label{app:exp_setup}

\subsection{AirfRANS}
\label{app:datasets}

We use the AirfRANS dataset~\cite{airfrans2022} of 2D RANS
simulations of NACA-series airfoils over ranges of angle of attack
and Reynolds number.  Each sample provides velocity $(u,v)$ and
pressure $p$ on an unstructured mesh plus geometric fields (signed
distance to surface, normal vectors).  The target field is $\R^{C}$
with $C=3$.

\paragraph{Surface-pressure sensor layout.}
The main table uses $N=8$ sensors at normalized arclength positions
$\{0.0625, 0.1875, 0.3125, 0.4375, 0.5625, 0.6875, 0.8125, 0.9375\}$
--- uniform spacing along the airfoil surface.  The sensor-count
ablation uses analogous schedules at $N\in\{4,12,16\}$.

\paragraph{Splits.}
(i)~\emph{Full-8}: all AoA and Reynolds values combined with a
fixed random split.
(ii)~\emph{AoA-8}: fixed Reynolds, disjoint AoA between train and
test (generalization in angle of attack).
(iii)~\emph{Reynolds-8}: fixed AoA, disjoint Reynolds between train
and test.
Split indices are frozen via JSON manifests to ensure
reproducibility.

\paragraph{Query and evaluation.}
During training we sample 2{,}048 query points per case with 60\%
concentration near the airfoil surface; the surface is evaluated
separately with 256 additional query points.  The primary metric is
relative $L^{2}$ error (``clean\_recon\_full'') averaged over the
test set; secondary metrics are surface pressure error, drag/lift
coefficients, and sensor-consistency error.

\subsection{Senseiver Cylinder (Cy\_Taira)}

We use the Cy\_Taira cylinder dataset as distributed with the
Senseiver public repository~\cite{senseiver2023}.  The field is 2D
vorticity behind a cylinder at $Re=100$.  Following the Senseiver
1\%-data protocol, all methods use the same fixed 50 training frames
(split seed 123).  The reported test metric is computed on the 4{,}950
non-training frames.  For methods that require checkpoint selection,
we use a fixed 200-frame validation subset drawn from the non-training
frames; the final reported metric still follows the Senseiver
4{,}950-frame convention.

\paragraph{Interior-8 protocol.}
Sensor positions match the official Senseiver setting (8 interior
sensors placed in the wake).  For this protocol, we cite the
published Senseiver number ($0.039$) as the comparison, and report
our \FS{} result under the same sensor layout, data protocol, and
relative-$L^{2}$ evaluation metric.

\paragraph{Surface-4/8/16 protocol.}
We introduce a physically motivated protocol where sensors are
placed on the cylinder wall, mirroring the AirfRANS surface-sensor
scenario.  The cylinder center (row 55.5, col 13.5) and radius (6\,px)
are determined from the NMI grid geometry ($112\times192$); sensors are
placed at evenly spaced angles and snapped to valid integer grid
locations.  The layouts are frozen across methods:
\begin{center}
\scriptsize
\begin{tabular}{cl}
\toprule
Layout & Grid coordinates [row, col] \\
\midrule
Surface-4 & [55,20], [49,14], [55,7], [62,13] \\
Surface-8 & [55,20], [51,18], [49,14], [51,9], [55,7], [60,9], [62,13], [60,18] \\
Surface-16 & [55,20], [53,19], [51,18], [50,16], [49,14], [50,11], [51,9], [53,8], \\
           & [55,7], [58,8], [60,9], [61,11], [62,13], [61,16], [60,18], [58,19] \\
\bottomrule
\end{tabular}
\end{center}
For this protocol, we run the official Senseiver code with only the
sensor coordinates changed.  Because the original paper does not
publish a surface-sensor setting, we select a strong official-code
configuration and reuse it across seeds with the same fixed 50
training frames; only the model seed is changed.  Surface-4 Senseiver
is evaluated at the fixed 2{,}200-epoch point selected by an explicit
convergence sweep, while Surface-8/16 use the long official-code
schedule; this prevents premature early stopping from dominating the
surface comparison.

\subsection{\FS{} Implementation Details}

\paragraph{Shared core.}
All experiments use the same architecture: per-sensor MLP encoder with
mean/max pooling, a linear primitive head, query-to-primitive
cross-attention with residual connection and feed-forward layer, and a
Fourier-feature residual MLP.  The decoder receives the scaffold
evaluation $\bm{f}_{\mathrm{prim}}(\bx)$, basis mass, global context,
and cross-attention output in all cases.

\paragraph{Cylinder Surface-4/8/16.}
The primitive head predicts an anisotropic,
rotated Gaussian state with $K=256/384/448$ primitives for
Surface-4/8/16 respectively.  The residual path is a Fourier-feature
MLP with 4 hidden layers (width $320/448/512$ for Surface-4/8/16) and
6 frequency bands.  A learned global token is appended.
No per-query geometry features or metadata conditioning are used.

\paragraph{Primitive parameterization (cylinder).}
Centers $\bmu_{k}\in[0,1]^{d}$ (domain normalized).  Scales
$\bsig_{k}$ are bounded for numerical stability: Surface-4/8 use
$[0.01,0.35]$, while Surface-16 uses the sharper range
$[0.005,0.25]$.
Weights $w_{k}\in(0,1)$ via sigmoid.  Amplitudes $\ba_{k}\in\R^{C}$
unconstrained.  Rotation angle $\theta_{k}$ is enabled.

\paragraph{AirfRANS (all splits).}
\FS{} uses the same scaffold-conditioned decoder as in the cylinder
experiments but with axis-aligned Gaussians (rotation disabled) and
lightweight domain conditioning.  The decoder additionally receives
6-D per-query features (signed distance to the airfoil,
nearest-surface normal, normalized arclength, and flow-frame
coordinates) and featurewise metadata modulation from normalized case
descriptors (Reynolds number, angle of attack; modulation gain $0.35$).
The Full and AoA splits use a learned global token; the Reynolds split
disables it.  Primitives: $K=96$; residual MLP: 3 hidden layers, width
192/128/192 for Full/AoA/Reynolds respectively; 6 frequency bands;
scale range $[0.01, 0.35]$.

\paragraph{Training (cylinder).}
Surface-cylinder runs use Adam with weight decay $10^{-6}$, batch size
16, ReduceLROnPlateau (factor $0.5$, patience 25), and gradient
clipping.  Learning rates are $8\times 10^{-4}$ for Surface-4 and
$5\times 10^{-4}$ for Surface-8/16.  Observation-consistency weights
are $\lambda_{\mathrm{obs}}=0.02/0.01/0.005$ for Surface-4/8/16,
respectively.  Minimum training lengths are 350--500 epochs with
early-stopping patience 100--180.  Exact per-run configurations are
archived with the experiment outputs.

\paragraph{Training (AirfRANS).}
Adam with learning rate $10^{-3}$, weight decay $10^{-6}$, batch size
2, ReduceLROnPlateau, gradient clipping at 1.0.
$\lambda_{\mathrm{obs}}=0.05$ and $\lambda_{\mathrm{surface}}=0.2$.  Early stopping patience 3 (in units of
full dataset passes).  Five training seeds per split for the main
table; three seeds for the controlled ablation.

\paragraph{Compute.}
All experiments were run on a single NVIDIA RTX 5060 Ti (16\,GB).
Typical wall-clock training times per run: cylinder Surface
$\approx$20--65\,min; AirfRANS $\approx$50\,min per split-seed.

\section{Baseline Implementations}
\label{app:baselines}

\paragraph{Senseiver (official / official-adapter).}
We run the official Senseiver public code from commit
\texttt{e443eb0} with only sensor coordinates changed for the
surface-cylinder setting.  In AirfRANS we adapt the official
conditioning scheme to the AirfRANS geometry tokens; we label this
variant \emph{Senseiver official-adapter} when it is not a direct
reproduction of a published experiment.

\paragraph{DeepONet.}
A branch-and-trunk architecture where the branch encodes sensor
readings (coordinates + values + optional geometry tokens) and the
trunk encodes normalized query coordinates with Fourier features.
Not an official DeepONet release; we implement the version best
suited to AirfRANS-style sparse sensing.

\paragraph{RecFNO (reimpl.) and FLRONet (reimpl.).}
Operator-style baselines.  RecFNO: pooled sensor tokens $\to$ dense
latent grid $\to$ spectral convolution $\to$ bilinear query
sampling.  FLRONet: radial/softmax sensor-to-grid embedding $\to$ CNN
decoder $\to$ bilinear sampling.  We clearly label these as
reimplementations; neither matches the exact architecture of any
single published paper.
For FlowBench LDC-3D, we use compact configurations (Senseiver
reimpl.\ 0.72M, RecFNO reimpl.\ 0.68M) tuned to the $64^3$ volumetric
task; larger configurations exhibited training instability on this
benchmark, with individual seeds diverging above 0.15 relative $L^2$.
The compact variants train stably across all five seeds and provide a
fairer representation of each method's capacity on this task.

\paragraph{POD-Ridge.}
Classical baseline: interpolate training fields to a regular grid,
take SVD to obtain POD modes, fit ridge regression from sensor
values to POD coefficients, project back to the query mesh.
Deterministic (no seed-level variance).

\section{Additional Results}
\label{app:additional_tables}

\subsection{Cylinder surface results: exact seed values}
\label{app:seed_stability}

Table~\ref{tab:cylinder_seed_values} lists the exact seed-level means
used to compute the main cylinder table.  All entries use the
surface-sensor layouts and the same fixed 50-frame
training split.  Senseiver official uses the public code with only the
surface coordinates and convergence schedule specified for this
surface protocol; Surface-4 is reported at a fixed 2{,}200 epochs
following the convergence sweep, while Surface-8/16 use the long-run
official-code schedule.

\begin{table}[h]
\centering
\caption{Exact per-seed relative $L^{2}$ values for the
surface-cylinder protocol.  These are seed-level means over the 4{,}950
test frames.}
\label{tab:cylinder_seed_values}
\scriptsize
\resizebox{\linewidth}{!}{%
\begin{tabular}{llccccc}
\toprule
Setting & Method & 123 & 124 & 125 & 126 & 127 \\
\midrule
Surface-4  & \FS{} (ours)       & 0.0679 & 0.0640 & 0.0572 & 0.0662 & 0.0654 \\
Surface-4  & Senseiver official & 0.0718 & 0.0643 & 0.0940 & 0.0817 & 0.0675 \\
Surface-4  & DeepONet           & 0.0981 & 0.0925 & 0.0916 & 0.0901 & 0.0900 \\
Surface-4  & FLRONet (reimpl.)  & 0.0702 & 0.0688 & 0.0789 & 0.0683 & 0.0604 \\
Surface-4  & RecFNO (reimpl.)   & 0.0783 & 0.0542 & 0.0975 & 0.0773 & 0.0655 \\
\midrule
Surface-8  & \FS{} (ours)       & 0.0479 & 0.0298 & 0.0348 & 0.0361 & 0.0304 \\
Surface-8  & Senseiver official & 0.0676 & 0.0603 & 0.0495 & 0.0602 & 0.0526 \\
Surface-8  & DeepONet           & 0.1272 & 0.1419 & 0.1086 & 0.1308 & 0.1420 \\
Surface-8  & FLRONet (reimpl.)  & 0.0579 & 0.0592 & 0.0796 & 0.0657 & 0.0766 \\
Surface-8  & RecFNO (reimpl.)   & 0.1138 & 0.0935 & 0.0876 & 0.1080 & 0.1092 \\
\midrule
Surface-16 & \FS{} (ours)       & 0.0166 & 0.0154 & 0.0170 & 0.0156 & 0.0152 \\
Surface-16 & Senseiver official & 0.0571 & 0.0722 & 0.0464 & 0.0575 & 0.0450 \\
Surface-16 & DeepONet           & 0.0959 & 0.0957 & 0.0960 & 0.0950 & 0.0905 \\
Surface-16 & FLRONet (reimpl.)  & 0.0425 & 0.0479 & 0.0515 & 0.0448 & 0.0432 \\
Surface-16 & RecFNO (reimpl.)   & 0.0658 & 0.0742 & 0.0635 & 0.0723 & 0.0768 \\
\bottomrule
\end{tabular}
}
\end{table}

\FS{} has the best five-seed mean in all three surface-sensor regimes.
The strongest non-ours method changes with sensor count: FLRONet is
closest at Surface-4/16, while Senseiver official is closest at
Surface-8 after long convergence-controlled training.

\subsection{Senseiver interior-8: official and local reruns}

For transparency we include both the published Senseiver number and
our local reruns of the official Senseiver code on the interior-8
protocol:

\begin{center}
\small
\begin{tabular}{lc}
\toprule
Setting & clean\_recon\_full \\
\midrule
Senseiver official (published) & 0.039 \\
Senseiver official-adapter (local rerun, 5 seeds) & 0.4118 $\pm$ 0.0135 \\
\FS{} (ours, single run) & \textbf{0.0097} \\
\bottomrule
\end{tabular}
\end{center}

The local adapter run is included only as a diagnostic: for the
Interior-8 leaderboard comparison we cite the published Senseiver
number, which is the most faithful representation of the original
method.  See Section~\ref{app:senseiver_notes} for the surface-protocol
official-code details.

\subsection{Theory-validation exact numbers}
\label{app:nmi_theory_exact}

This subsection reports the exact values underlying
the theory-validation figure in the main paper.  The oracle smoothing
diagnostic uses the official Senseiver Cy\_Taira data, excludes the
fixed 50 training frames, samples 30 evenly spaced held-out frames, and
uses nested farthest-point centers with $\alpha=0.3$.  Smoothing
applies \texttt{scipy.ndimage.gaussian\_filter} with
$\sigma=\text{smooth\_px}$ (Gaussian kernel in grid-pixel units;
$0$=raw, $1$, $4$, $8$ px).  Fitted exponents are log-log linear
regressions over $K\in\{16,32,64,128,256,512,1024,2048,4096,8192\}$.
The roughness column reports the normalized gradient magnitude:
$\text{roughness}=\|\nabla f\|_{\mathrm{rms}}/\|f\|_{\mathrm{rms}}$,
computed over valid grid points.  This controlled smoothing diagnostic
is not used in any benchmark evaluation or trained model; it exists
solely to separate the smooth-field approximation mechanism from the
high-frequency roughness of the unsmoothed benchmark data.
The trained K-sweep uses raw data under the Surface-8 protocol.

\begin{table}[t]
\centering
\caption{Exact oracle Shepard smoothing diagnostics on the Senseiver cylinder data.  Smoothing is applied to the vorticity frames in grid-pixel units before the fixed-center oracle interpolation; no learned model is used.}
\label{tab:nmi_oracle_smoothing_exact}
\scriptsize
\begin{tabular}{lccccc}
\toprule
Setting & roughness & fitted exponent & $K=16$ & $K=128$ & $K=2048$ \\
\midrule
raw & 0.5241 & -0.236 & 1.0851 $\pm$ 0.0866 & 0.7452 $\pm$ 0.0189 & 0.4144 $\pm$ 0.0022 \\
1px & 0.3471 & -0.367 & 1.1039 $\pm$ 0.1031 & 0.6703 $\pm$ 0.0229 & 0.2498 $\pm$ 0.0014 \\
4px & 0.1691 & -0.521 & 1.1276 $\pm$ 0.1137 & 0.4897 $\pm$ 0.0254 & 0.1093 $\pm$ 0.0010 \\
8px & 0.1212 & -0.536 & 1.0529 $\pm$ 0.1005 & 0.3527 $\pm$ 0.0119 & 0.0812 $\pm$ 0.0010 \\
\bottomrule
\end{tabular}
\end{table}

\begin{table}[t]
\centering
\caption{Exact trained Surface-8 K-sweep values.  All rows use the Senseiver cylinder data, the Surface-8 sensor layout, the fixed 50-frame training split, and evaluate relative $L^2$ on 4{,}950 held-out frames.  Values are mean $\pm$ seed standard deviation over seeds $\{123,\ldots,127\}$.}
\label{tab:nmi_trained_k_sweep_exact}
\small
\begin{tabular}{lcc}
\toprule
$K$ & Full \FS{} & Primitive-only \\
\midrule
64 & 0.0282 $\pm$ 0.0017 & 0.3471 $\pm$ 0.0149 \\
128 & 0.0287 $\pm$ 0.0015 & 0.2933 $\pm$ 0.0252 \\
256 & 0.0298 $\pm$ 0.0033 & 0.2501 $\pm$ 0.0125 \\
384 & 0.0358 $\pm$ 0.0073 & 0.2356 $\pm$ 0.0199 \\
\bottomrule
\end{tabular}
\end{table}

\subsection{\FS{} $K$-ablation}

We use $K$-sweeps as a model-design diagnostic rather than as a
headline benchmark: the final surface-cylinder main-table runs tune
$K$ jointly with decoder width, attention dimension, scale range, and
observation-consistency weight and remain fixed in the main results.
The K-sweep varies only $K$ while holding all other hyperparameters at
their $K\!=\!384$-tuned values (residual hidden dim 448, attention dim
160, 4 heads, scale range $[0.01, 0.35]$, $\lambda_{\mathrm{obs}}=0.01$).
In particular, Surface-8 keeps the selected $K=384$ configuration in
the main cylinder table; the smaller-$K$ points demonstrate the
single-axis effect of primitive count under otherwise fixed
architecture.  The diagnostic sweep illustrates that
primitive-only models improve with $K$ yet remain far worse than the
full model at all tested $K$, while the full model degrades at larger
$K$, consistent with the variance bottleneck mechanism.

\begin{center}
\small
\begin{tabular}{lcc}
\toprule
Configuration & Qualitative trend & Use in final model \\
\midrule
Gaussian-only (no residual) & improves with $K$ but remains far worse & ablation only \\
\FS{} (with residual)       & best at small $K$, degrades at large $K$ & jointly tuned with decoder \\
\bottomrule
\end{tabular}
\end{center}

This pattern is consistent with the bias--variance trade-off
prediction: increasing primitive count alone is insufficient under
sparse observations, so the final model relies on the Gaussian state
for structured low-dimensional geometry and the residual decoder for
high-frequency correction.

\section{PhySense-Car: Protocol Details}
\label{app:physense_car}

This appendix supplements the PhySense-Car results reported in main
text Section~\ref{sec:physense_car} (Table~\ref{tab:physense_car},
Figure~\ref{fig:physense_car}) with the exact protocol used.

\paragraph{Data and split.}
We use the official PhySense-Car raw files and the official pressure
normalization constants.  Each case contains $\sim\!95$k surface
points with per-point coordinates and pressure values.  Cases 1--75
are used for training and cases 76--100 for testing, matching the
released split.

\paragraph{Sensor sampling.}
For the random-sensor protocol we use the official fixed seed:
\texttt{torch.manual\_seed(1)} followed by \texttt{torch.randperm}, so
that the $N\in\{15,30,50,100,200\}$ sensors are identical to those
used by the released PhySense evaluation code.  The same sensor set is
used for both the released PhySense checkpoint and \FS{}.

\paragraph{Conditioning.}
To match the released sensor-only PhySense model, all condition inputs
such as speed and yaw are set to zero; the model receives only sensor
coordinates and pressure readings.

\paragraph{Metric.}
We report the normalized relative-$L^2$ error using the same
$3\sigma$ pressure mask and the same normalization constants as the
released PhySense evaluation code, so that the \FS{} numbers in
Table~\ref{tab:physense_car} are directly comparable to the four
baselines quoted from PhySense~\cite{ma2025physense} Table~3.

\paragraph{Training.}
Our \FS{} run is trained for a fixed 301 epochs with random sensor
counts $N\sim\mathrm{Uniform}[10,200]$ per training sample, so that a
single checkpoint generalizes across the five evaluation sensor
counts.  The final checkpoint is evaluated directly without
validation-based checkpoint selection.

The qualitative comparison in main text Figure~\ref{fig:physense_car}
uses two held-out cases at $N=50$ with the same random sensors for
both methods.

\section{Qualitative Reconstructions}
\label{app:qualitative}
The qualitative reconstruction figures in the main paper are generated from archived trained
checkpoints and held-out samples; no additional smoothing or
post-processing is applied to the predictions.  The AirfRANS panel uses
display-normalized centered pressure only for visual contrast:
subtracting the crop median and applying a robust asymmetric display
scale does not change the reconstruction error, and all quantitative tables use the original multi-channel physical field.  
We visualize the learned scaffold on the cylinder domain shown in Figure~\ref{fig:cylinder_scaffold_appendix}, where the regular geometry makes spatial structure directly visible. On AirfRANS, the scaffold's interpretability derives from the same explicit parameterization (inspectable centers, scales, and amplitudes); its quantitative contribution is confirmed via ablation (Table~\ref{tab:ablation}).

\begin{figure}[h]
\centering
\includegraphics[width=0.62\linewidth]{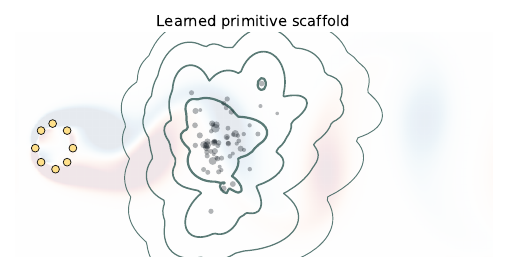}
\caption{Appendix diagnostic for the Senseiver cylinder Surface-8
run.  The panel visualizes primitive influence (maximum weighted
Gaussian basis response) and the highest-weight primitive centers over
a faint ground-truth vorticity context background used only for spatial
reference.  This is not a primitive-only prediction field; it shows how
the structured latent state places support near the body and wake before
the state-conditioned residual decoder forms the final reconstruction.}
\label{fig:cylinder_scaffold_appendix}
\end{figure}

These qualitative panels are intentionally not presented as evidence
that the Gaussian primitive field alone solves the task.  Instead, they
support the central interpretation used throughout the paper:
primitives provide an interpretable, geometry-aware latent scaffold,
whereas the state-conditioned residual decoder supplies the missing
high-frequency correction.  This is why the full model can track the
cylinder vortex street and AirfRANS near-body pressure structure even
when the primitive-only output field is too smooth.

\section{Senseiver Reproduction Notes}
\label{app:senseiver_notes}

\paragraph{What we report vs.\ what Senseiver reports.}
The main-text Cylinder Interior-8 comparison uses the published
Senseiver number ($0.039$) as the baseline.  This avoids treating a
local reproduction attempt as the official method.  Importantly:
\begin{itemize}
\item For the Interior-8 comparison we quote the published number,
avoiding any implied reproduction failure.
\item For the Surface-$K$ protocol (which has no published baseline),
we run the official Senseiver code with the sensor coordinates moved
to the cylinder wall, using the same fixed 50-frame training split as
the other methods.
\end{itemize}

\paragraph{Training configuration used.}
For the surface protocol, seed 123 is the selected tuning-best
official-code Senseiver run for each sensor count.  Seeds 124--127
reuse the same selected configuration and the same fixed training
frames; only the model seed changes.  This gives both a best-of-five
view and a seed-level stability view under the surface-sensor protocol.

\paragraph{Interior-8 reproduction.}
To verify the published Senseiver number we ran the official code on the
Interior-8 layout with the default architecture
(latent dim $128$, $64$ latents, $4$ attention heads, $3$ encoder
layers, $2$ self-attention layers per block) and
the same $70/15/15$ train/val/test split.
Training used the Adam optimizer with an initial learning rate of
$10^{-3}$, ReduceOnPlateau scheduling (factor $0.5$, patience $3$,
minimum $10^{-5}$), and gradient clipping at norm $1.0$.
Over $5$ seeds ($123$--$127$), the test-set relative $L^{2}$ error
at epoch $340$ is $0.0390\pm0.0075$ (mean$\pm$std), matching the
published value.

\paragraph{Clarifications on naming.}
We use the following convention throughout the paper and appendix:
\emph{``Senseiver official''} refers to the published
Senseiver~\cite{senseiver2023} number or to the official code with
only sensor coordinates adjusted for a new protocol;
\emph{``Senseiver official-adapter''} refers to a local
reproduction where architecture/hyperparameters were adapted beyond
the sensor coordinates.  \emph{``RecFNO (reimpl.)''} and
\emph{``FLRONet (reimpl.)''} are our re-implementations, not
official releases.

\end{document}